\documentclass{article}

\PassOptionsToPackage{numbers, sort&compress}{natbib}

\usepackage[preprint]{neurips_2026}

\usepackage[utf8]{inputenc} 
\usepackage[T1]{fontenc}    
\usepackage{hyperref}       
\usepackage{url}            
\usepackage{booktabs}       
\usepackage{amsfonts}       
\usepackage{nicefrac}       
\usepackage{microtype}      
\usepackage[table]{xcolor}  
\definecolor{ourblue}{RGB}{230,230,230}

\usepackage{bm}
\usepackage{amsthm}

\newtheorem{lemma}{Lemma}
\usepackage{amsmath}
\usepackage{graphicx}
\usepackage{multirow}
\usepackage{enumitem}

\newtheorem{theorem}{Theorem}
\newtheorem{remark}{Remark}

\title{GeoFlowVLM: Geometry-Aware Joint Uncertainty for Frozen Vision-Language Embeddings}

\author{
Mayank Nautiyal$^{1,2}$,
Li Ju$^{1,2}$,
Andreas Hellander$^{1}$,
Ekta Vats$^{1}$,
Prashant Singh$^{1,2}$ \\
$^1$Department of Information Technology, Uppsala University, Uppsala, Sweden \\
$^2$SciLifeLab, Uppsala University, Uppsala, Sweden
}

\begin{document}

\maketitle

\begin{abstract}
Standard dual-encoder vision-language models that map images and text to deterministic points on a shared unit hypersphere through $\ell_2$ normalization typically expose neither \emph{aleatoric} uncertainty (cross-modal ambiguity) nor \emph{epistemic} uncertainty (lack of training-distribution support). Existing post-hoc methods either recover at most one of the two uncertainty components, or ignore the hyperspherical geometry of these models' embeddings. We propose \textbf{GeoFlowVLM} as a post-hoc adapter that learns the joint distribution of paired $\ell_2$-normalised dual-encoder VLM embeddings on the product hypersphere $\mathbb{S}^{d-1} \times \mathbb{S}^{d-1}$ via Riemannian flow matching with a single masked velocity field. A consistency result shows that, in the population limit, the trained network exposes the joint flow and both cross-modal conditional flows as valid Riemannian flow-matching velocity fields on their respective domains. We derive two quantities from this single model: a conditional retrieval entropy that quantifies aleatoric ambiguity with a decision-theoretic interpretation via a Fano-type bound, and a marginal-typicality epistemic score justified by an exact chain-rule decomposition of the joint NLL. This decomposition isolates a cross-modal pointwise-mutual-information term that is structurally discriminative rather than epistemic, and is empirically the only consistently uninformative standalone component. Empirically, the entropy tracks Recall@1 with near-ideal monotonic calibration across three retrieval benchmarks in both directions, and the marginal-typicality sum yields consistently calibrated selective accuracy across four zero-shot classification benchmarks.

\end{abstract}

\section{Introduction}\label{sec:intro}

Dual-encoder vision-language models (VLMs) such as CLIP~\citep{clip} and SigLIP~\citep{zhai2023sigmoid} project images and text to a shared unit hypersphere via $\ell_2$-normalisation, reducing cross-modal compatibility to cosine similarity. This deterministic point representation is effective for retrieval and zero-shot transfer, but provides no estimate of prediction uncertainty, neither the \emph{aleatoric} ambiguity arising from the intrinsically one-to-many nature of vision-language correspondence~\citep{chun2021probabilistic}, nor the \emph{epistemic} uncertainty on queries falling outside the regions of embedding space seen during training. Quantifying both kinds of uncertainty without retraining the underlying encoder is the central challenge of post-hoc uncertainty quantification (UQ) over frozen vision-language embeddings.

Existing post-hoc UQ methods (Table~\ref{tab:landscape}) split along two axes: the type of uncertainty captured, and whether the method respects the hyperspherical geometry of these embeddings.
Aleatoric methods attach a distribution to each modality. Of these, ProbVLM~\citep{probVLM_frozen_embeddings}, GroVE~\citep{venkataramanan2025probabilistic}, and BayesVLM~\citep{baumann2024post} all operate in ambient Euclidean space and do not account for the $\ell_2$-normalisation constraint that defines the considered VLM representation space; AsymVLM~\citep{ju2025exploiting} respect the sphere, but only for the text modality.
Density-based epistemic methods over frozen encoders are scarcer: MC Dropout~\citep{gal2016dropout} ignores embedding geometry, while REPVLM~\citep{ju2026epistemicuncertaintyquantificationpretrained} learns a density per modality on the sphere but provides no aleatoric estimate.
ProbVLM is the only method that exposes both kinds of uncertainty, yet it does so heuristically: its epistemic estimate is MC Dropout over its adapter, measuring uncertainty of the adapter rather than the backbone, and its generalised Gaussian model misses the hyperspherical structure.
The proposed GeoFlowVLM addresses the gap in the bottom-right cell of Table~\ref{tab:landscape}: a post-hoc method exposing aleatoric and density-based epistemic readouts from a single coupled density while respecting the hyperspherical geometry of these models' embeddings.

\begin{table}[h]
\centering
\small
\setlength{\tabcolsep}{8pt}
\renewcommand{\arraystretch}{1.2}
\caption{
    \textbf{Post-hoc UQ for frozen VLMs.}
    Rows: geometry awareness. Columns: uncertainty type(s) exposed.
    The geometry-aware $\times$ both cell represents a gap that GeoFlowVLM addresses.
    $^{\dagger}$ProbVLM's epistemic estimate is MC~Dropout over its
    adapter, not the backbone density.
}
\label{tab:landscape}
\begin{tabular}{l ccc}
\toprule
& \textbf{Aleatoric} & \textbf{Epistemic} & \textbf{Both} \\
\midrule
\textbf{Geometry-agnostic}   & GroVE, BayesVLM        & MC Dropout & ProbVLM$^{\dagger}$ \\
\textbf{Geometry-aware}   & AsymVLM                & REPVLM     & \cellcolor{ourblue}\textbf{GeoFlowVLM (ours)} \\
\bottomrule
\end{tabular}
\end{table}


\textbf{Contributions.}
\textbf{(i)} GeoFlowVLM: a post-hoc probabilistic adapter trained via Riemannian flow matching with a single masked velocity field on the product hypersphere $\mathbb{S}^{d-1} \times \mathbb{S}^{d-1}$, yielding joint, marginal, and cross-modal conditional densities without encoder retraining. An ablation against an otherwise-identical Euclidean variant establishes hyperspherical geometry as structurally necessary.

\textbf{(ii)} A conditional-posterior-entropy aleatoric estimator: GeoFlowVLM's cross-modal conditional density, evaluated and normalised over the retrieval gallery, defines a per-query categorical posterior whose entropy quantifies retrieval ambiguity and, via Fano's inequality, certifies a lower bound on MAP Bayes risk. Empirically it tracks retrieval difficulty with near-monotonic calibration across three benchmarks in both directions, outperforming all baselines in image-to-text retrieval and matching the strongest directional baseline in text-to-image retrieval.

\textbf{(iii)} A chain-rule decomposition of the joint NLL into per-modality marginal typicality and cross-modal pointwise mutual information (PMI). PMI is structurally discriminative rather than epistemic while marginal typicality is the reliable epistemic component. This explains the effectiveness of per-modality methods such as REPVLM and frames jointly trained marginals as a coherent extension.

\textbf{Outline} We review related work (Sec.~\ref{sec:related_works}), develop GeoFlowVLM and the consistency theorem (Sec.~\ref{sec:method}), develop the aleatoric and epistemic readouts and the family decomposition (Sec.~\ref{sec:uq}), report experiments (Sec.~\ref{sec:experiments}) and ablations (Sec.~\ref{ablation_main}), and conclude (Sec.~\ref{sec:conclusion}).

\section{Related Work}\label{sec:related_works}
GeoFlowVLM draws on four bodies of work: post-hoc probabilistic adapters for frozen VLMs, Riemannian generative modelling, the geometric structure of normalised VLM embeddings, and density-based epistemic scoring.

\noindent\textbf{Post-hoc UQ for frozen VLMs.}
The post-hoc baselines surveyed in Sec.~\ref{sec:intro} differ mainly in the form of distribution they attach to a frozen encoder. ProbVLM~\citep{probVLM_frozen_embeddings} trains lightweight heteroscedastic-Gaussian adapters per modality and reads out epistemic uncertainty via MC Dropout over those same adapters; GroVE~\citep{venkataramanan2025probabilistic} couples the two modalities through a shared latent variable with Gaussian-process mappings; BayesVLM~\citep{baumann2024post} obtains a Gaussian over embeddings via a last-layer Laplace approximation; AsymVLM~\citep{ju2025exploiting} replaces the ambient-Gaussian assumption with directional (vMF / power-spherical) distributions but treats each modality in isolation; REPVLM~\citep{ju2026epistemicuncertaintyquantificationpretrained} fits a Riemannian-flow density per modality on the sphere and uses its log-likelihood as an epistemic score; and MC Dropout~\citep{gal2016dropout} reduces uncertainty to forward-pass variance. Training-time probabilistic VLMs such as PCME~\citep{chun2021probabilistic}, PCME++~\citep{chun2024pcmepp}, and ProLIP~\citep{chun2025prolip} introduce stochastic embeddings during pre-training and are therefore outside the frozen-encoder regime we adopt; they form a natural ``ceiling'' for what post-hoc methods can hope to recover, and we treat them as out-of-scope rather than as baselines.

\noindent\textbf{Aleatoric/epistemic decomposition over frozen encoders.}
The classical predictive-uncertainty decomposition into aleatoric and epistemic components~\citep{kendall2017uncertainties,hullermeier2021aleatoric} relies on a posterior over model parameters, which is unavailable when the encoder is frozen. In the frozen-encoder regime, the standard surrogate is \emph{density-based}: aleatoric ambiguity is summarised by the entropy of a predictive posterior, and epistemic support by typicality under a learned density~\citep{ju2026epistemicuncertaintyquantificationpretrained}. We adopt this surrogate explicitly: throughout, ``epistemic score'' refers to a typicality-based density readout, not a Bayesian-posterior variance, and we flag the distinction at the points it matters (Sec.~\ref{sec:uq:epistemic}).

\noindent\textbf{Generative modelling on Riemannian manifolds.}
Continuous-time generative models have been extended from Euclidean space to general Riemannian manifolds. Riemannian continuous normalising flows~\citep{mathieu2020riemannian} and score-based models~\citep{de2022riemannian} rely on costly heat-kernel approximations or simulation; flow matching~\citep{lipman2023flow} sidesteps this by regressing onto analytic conditional vector fields, extended to Riemannian geometries with closed-form geodesic targets by Chen and Lipman~\citep{chen2024flow}. Minibatch optimal transport coupling further straightens trajectories~\citep{tong2024improving}; concurrently, Lee et al.~\citep{lee2026geometry} show that spherical flow matching outperforms Euclidean baselines for natural image generation. In our setting, the closest precedent is REPVLM~\citep{ju2026epistemicuncertaintyquantificationpretrained}, which fits a Riemannian flow density to a single VLM modality at a time. GeoFlowVLM departs from this line by learning a \emph{coupled} density on the product hypersphere with a masked velocity field, so that conditional posteriors, marginals, and the joint likelihood are all read out from one trained model (Theorem~\ref{thm:conditional_reduction}).

\noindent\textbf{Geometry of VLM embeddings.}
Contrastive training projects VLM embeddings onto the unit hypersphere via $\ell_2$-normalisation, and recent analyses split by whether they study the embeddings before or after this projection. \emph{Unnormalised-space} works characterise the raw encoder outputs: Levi et al.~\citep{levi2025the} reveal a double-ellipsoid structure in pre-normalisation CLIP, Betser et al.~\citep{betser2025whitened} use whitened embeddings as a likelihood surrogate for paired images and captions, and Draganov et al.~\citep{draganov2025norms} argue that embedding norms carry self-supervised signal. \emph{Normalised-space} works operate directly on the sphere: Liang et al.~\citep{liang2022mind} document a persistent modality gap, complementing alignment--uniformity analyses of contrastive representations~\citep{wang2020contrastive}.

\noindent\textbf{Density-based out-of-distribution detection.}
Our density-based epistemic UQ estimate connects to likelihood-based OOD scoring; the core observation that raw likelihood is coarse and that typicality refinements improve discrimination directly motivates the family parameterisation in Sec.~\ref{sec:uq:epistemic}.
\section{Joint Density on the Product Hypersphere}\label{sec:method}

\paragraph{Problem setup.}
Let $\mathcal{X}$ denote the image space and $\mathcal{Y}$ the space of text captions, with $I\in\mathcal{X}$ and $T\in\mathcal{Y}$ denoting an image-text pair. Given any contrastive VLM with image encoder $f_I:\mathcal{X}\to\mathbb{R}^d$ and text encoder $f_T:\mathcal{Y}\to\mathbb{R}^d$ producing $\ell_2$-normalised embeddings (e.g., CLIP \citep{clip}, SigLIP \cite{zhai2023sigmoid}, EVA-CLIP \citep{sun2023eva}), the pair $(I,T)$ maps to paired unit-normalised embeddings $(\mathbf{e}_I,\mathbf{e}_T)$ on the unit hypersphere $\mathbb{S}^{d-1}$. We learn a probabilistic model $p_\phi(\mathbf{e}_I,\mathbf{e}_T)$ of their joint distribution that respects this geometry, from which both aleatoric and epistemic uncertainty are then derived in Sec.~\ref{sec:uq} without retraining $f_I$ or $f_T$. We build on Riemannian flow matching~\citep{chen2024flow}, which on simple manifolds such as the sphere regresses onto a closed-form geodesic-tangent target and trains simulation-free; sampling integrates the learned ODE numerically, with periodic projection onto the manifold to prevent drift.

\paragraph{Joint formulation.}
We model concatenated pairs $\mathbf{z}=[\mathbf{e}_I;\mathbf{e}_T]\in\mathcal{M}=\mathbb{S}^{d-1}\times\mathbb{S}^{d-1}$ as i.i.d.\ samples from a single joint distribution $ p(\mathbf{z})$ on the product manifold $\mathcal{M}$, resulting in the model \textit{GeoFlowVLM} (\textbf{Geo}desic \textbf{Flow} Matching for \textbf{V}ision-\textbf{L}anguage \textbf{M}odels). The key design choice is to expose joint generation, $T\!\to\!I$, and $I\!\to\!T$ inference through a \emph{single} velocity field $\mathbf{v}_t^{\phi}(\mathbf{z}_t,\mathbf{m})$ conditioned on a block-structured binary mask $\mathbf{m}\in\{0,1\}^{2d}$: $\mathbf{m}_k=1$ marks dimensions transported by the flow, $\mathbf{m}_k=0$ marks dimensions held at their observed values with the corresponding stream's time fixed to $0$. The three masks $\mathbf{m}_{T\to I}=[\mathbf{1}_d;\mathbf{0}_d]$, $\mathbf{m}_{I\to T}=[\mathbf{0}_d;\mathbf{1}_d]$, and $\mathbf{m}_{\mathrm{joint}}=[\mathbf{1}_d;\mathbf{1}_d]$ select the three modes; this design enables the conditional posteriors and joint density required for uncertainty quantification (Sec.~\ref{sec:uq}) to be read off a single trained model.

\paragraph{Training objective.}\label{training}
We use geodesic interpolation on each sphere factor as the conditional probability path from a noise sample $\mathbf{z}_0\sim p_0$ to a data sample $\mathbf{z}_1\sim p(\mathbf{z})$, where $p_0$ is the uniform distribution on $\mathcal{M}$ (see Appendix~\ref{sec:supp:product-geometry}), and pair source-target samples within each minibatch via entropic optimal transport on the geodesic-distance cost~\citep{cuturi2013sinkhorn,tong2024improving} to straighten trajectories. The mask restricts supervision to the generated dimensions, giving the masked CFM objective,
\begin{equation}\label{eq:loss}
    \mathcal{L}_{\mathrm{GeoFlowVLM}}(\phi)
    \;=\;
    \mathbb{E}_{t,\mathbf{z}_0,\mathbf{z}_1,\mathbf{m}}
    \!\left[
    \bigl\|\mathbf{m}\odot\bigl(\mathbf{v}_t^{\phi}(\mathbf{z}_t,\mathbf{m})-\mathbf{v}_t(\mathbf{z}_t\mid\mathbf{z}_1)\bigr)\bigr\|_{g_{\mathbf{z}_t}}^2
    \right],
\end{equation}
where $\mathbf{v}_t(\mathbf{z}_t\mid\mathbf{z}_1)$ is the closed-form geodesic-tangent target on $\mathcal{M}$, $g_{\mathbf{z}_t}$ is the product metric (which on each sphere factor restricts to the round metric, so $\|\cdot\|_{g_{\mathbf{z}_t}}$ reduces to the Euclidean norm on tangent vectors at $\mathbf{z}_t$), and conditioned dimensions remain fixed at $\mathbf{z}_1$ throughout. The closed-form interpolant, target velocity, per-mask time scheduling, and network architecture are deferred to Appendices~\ref{sec:supp:product-geometry} and~\ref{sec:supp_architecture}. We sample masks under a curriculum distribution $p(\mathbf{m})$ assigning positive probability to all three modes; the schedule is given in Appendix~\ref{sec:supp_hyperparams}.

\paragraph{Per-mask time scheduling.}
Times are sampled per stream: under $\mathbf{m}_{\mathrm{joint}}$ both streams share a single $t\sim\mathcal{U}[0,1]$; under $\mathbf{m}_{I\to T}$ the text stream receives $t\sim\mathcal{U}[0,1]$ while the image stream is fixed at $t=0$; under $\mathbf{m}_{T\to I}$ the roles are reversed. The conditional reduction in Theorem~\ref{thm:conditional_reduction} is driven by the mask zeroing the loss on conditioned dimensions and the interpolant pinning those dimensions to $\mathbf{z}_1$, so the population-limit identity does not depend on the specific time assigned to the conditioned stream. We adopt the fixed-$t{=}0$ convention on conditioned streams for two operational reasons: the network sees a clean conditioner at every flow time, matching how it is used at inference; and the training and inference evaluation regimes coincide ($t_{\mathrm{cond}}{=}0$, generated stream's $t$ varying), simplifying generalisation arguments under finite samples and finite capacity.

\paragraph{Sampling.}
Generation in any of the three modes integrates the Riemannian ODE $\frac{d\mathbf{z}_t}{dt}=\mathbf{v}_t^{\phi}(\mathbf{z}_t,\mathbf{m})$ from $t=0$ to $t=1$, projecting intermediate states onto $\mathcal{M}$ after each step. Marginals are obtained by sampling from the joint and discarding the unwanted modality; classifier-free guidance is also available and detailed in Appendix~\ref{sec:supp:cfg}.

\subsection{Consistency of Masked Riemannian Conditional Flow Matching (CFM)}\label{sec:method:theorem}

Although the masked velocity field is trained as a single object, each
conditional mask reduces it, in the population limit, to a valid Riemannian
conditional flow on the corresponding marginal sphere. 

\begin{theorem}[Masked CFM recovers consistent Riemannian velocity fields]%
\label{thm:conditional_reduction}
Assume the joint data distribution $p$ admits a bounded density on
$\mathcal{M} = \mathbb{S}^{d-1} \times \mathbb{S}^{d-1}$, that the per-mask
time schedule holds as described in Sec.~\ref{training}, and that
the hypothesis class is sufficiently expressive to attain the population
minimum of $\mathcal{L}_{\mathrm{GeoFlowVLM}}$. Then any minimiser
$\phi^\star$ satisfies:

\begin{enumerate}[leftmargin=*]
\item \emph{(Joint flow.)}
$\mathbf{v}_t^{\phi^\star}(\mathbf{z},\mathbf{m}_{\mathrm{joint}})$ is the
marginal Riemannian CFM velocity transporting $p_0$ to $p_1$ on
$\mathcal{M}$, almost everywhere in $(t, \mathbf{z})$.

\item \emph{(Conditional flows.)} For any $\mathbf{e}_I^*$ in the support
of the image marginal, the second-block component of
$\mathbf{v}_t^{\phi^\star}(\cdot, \mathbf{m}_{I \to T})$ is the marginal
Riemannian CFM velocity transporting $p_0$ to $p(\cdot \mid \mathbf{e}_I^*)$
on $\mathbb{S}^{d-1}$, almost everywhere in $(t, \mathbf{e}_T)$.
The symmetric statement holds for $\mathbf{m}_{T \to I}$.
\end{enumerate}
\end{theorem}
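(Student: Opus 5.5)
The plan is to decompose the population loss $\mathcal{L}_{\mathrm{GeoFlowVLM}}$ by conditioning on the mask. Because $p(\mathbf{m})$ puts positive mass on each of the three modes, we write $\mathcal{L}=\sum_{\mathbf{m}} p(\mathbf{m})\,\mathcal{L}_{\mathbf{m}}(\phi)$, where each $\mathcal{L}_{\mathbf{m}}$ involves only the restriction $\mathbf{v}^{\phi}_t(\cdot,\mathbf{m})$. The expressivity assumption says the three restrictions can be chosen independently, so a minimiser of $\mathcal{L}$ must minimise each $\mathcal{L}_{\mathbf{m}}$ separately; otherwise one could improve a single mode without changing the others. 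This reduces the theorem to three separate regression problems. For each, we use the standard Riemannian CFM identity of Chen and Lipman~\citep{chen2024flow}: the $L^2(g)$ regression of the conditional target $\mathbf{v}_t(\mathbf{z}_t\mid\mathbf{z}_1)$ onto functions of $(t,\mathbf{z}_t)$ is minimised by the conditional expectation $u_t(\mathbf{z})=\mathbb{E}[\mathbf{v}_t(\mathbf{z}_t\mid\mathbf{z}_1)\mid \mathbf{z}_t=\mathbf{z}]$. This minimiser is unique almost everywhere with respect to the law of $(t,\mathbf{z}_t)$, and it generates the marginal path $p_t$ because the continuity equation is linear in the mixture.

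\textbf{Joint mode.} With $\mathbf{m}_{\mathrm{joint}}$ the mask is all ones and both streams share $t$. So $\mathcal{L}_{\mathrm{joint}}$ is exactly the Riemannian CFM loss on $\mathcal{M}$ with the product metric and the geodesic interpolant factorwise, and item 1 follows directly from the identity above. We need the marginal path $p_t$ to have positive density on $\mathcal{M}$ for $t<1$, so that "a.e.\ in $(t,\mathbf{z})$" with respect to $p_t\,dt$ means a.e.\ with respect to volume. This holds because $p_0$ is uniform and geodesic interpolation from a uniform source is a diffeomorphic push-forward away from the cut locus, which is a null set. The boundedness of $p$ is used only to guarantee that the losses are finite and that Fubini applies.

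\textbf{Conditional modes.} Under $\mathbf{m}_{I\to T}$ the image block is pinned to $\mathbf{e}_I=\mathbf{e}_{I,1}$ for all $t$, and the mask removes its contribution from the loss. So $\mathcal{L}_{I\to T}=\mathbb{E}\,\|\mathbf{v}^{\phi,T}_t(\mathbf{e}_I,\mathbf{e}_{T,t})-\mathbf{v}_t(\mathbf{e}_{T,t}\mid\mathbf{e}_{T,1})\|^2$, where $\mathbf{e}_{T,t}$ is the geodesic interpolant on the second sphere. The network input is then effectively $(t,\mathbf{e}_I,\mathbf{e}_T)$, with the conditioner's fixed time carrying no information. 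Disintegrating $p(\mathbf{e}_I,\mathbf{e}_T)=p_I(\mathbf{e}_I)\,p(\mathbf{e}_T\mid\mathbf{e}_I)$, the pointwise minimiser is $\mathbb{E}[\mathbf{v}_t(\mathbf{e}_{T,t}\mid\mathbf{e}_{T,1})\mid \mathbf{e}_I,\mathbf{e}_{T,t}]$. For $p_I$-a.e.\ $\mathbf{e}_I^*$, this is the marginal CFM field for the single-sphere problem with source $p_0$ and target $p(\cdot\mid\mathbf{e}_I^*)$. Applying the Chen--Lipman identity fibrewise then gives item 2, and $\mathbf{m}_{T\to I}$ follows by symmetry.

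\textbf{Main obstacles.} The first is the minibatch entropic OT coupling. It makes $(\mathbf{z}_0,\mathbf{z}_1)$ dependent, and under the conditional mask it couples noise to the \emph{generated} block using costs that may also involve the pinned block. We would handle this by noting that the CFM identity only needs some coupling $\pi$ with the correct marginals: the minimiser is the conditional expectation under $\pi$, and it still transports the first marginal to the second. The subtle point is whether, conditionally on $\mathbf{e}_I^*$, the noise marginal remains $p_0$ on the text sphere. In the population limit we would either assume independent pairing on conditional modes or argue that the OT plan's first marginal restricted to each fibre is still uniform. If that fails, we would state the result for the independent coupling.

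The second obstacle is the quantifier "for any $\mathbf{e}_I^*$ in the support". Minimisation only determines the field $p_I$-a.e. We would read the claim as holding $p_I$-a.e., or upgrade it to every point of the support by assuming continuity of the network and of $\mathbf{e}_I\mapsto p(\cdot\mid\mathbf{e}_I)$.
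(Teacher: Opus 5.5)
Your proposal follows the paper's proof: the mask-wise decomposition $\mathcal{L}_{\mathrm{GeoFlowVLM}}=\sum_{\mathbf{m}}p(\mathbf{m})\,\mathcal{L}_{\mathbf{m}}$ over disjoint input domains (the paper's simultaneous-attainability argument), the Bayes-regressor characterisation of each pointwise minimiser, and the fibrewise identification of that regressor with the marginal velocity transporting $p_0$ to $p(\cdot\mid\mathbf{e}_I^*)$ via the Lipman/Chen--Lipman identity. Both obstacles you flag are real, and the paper does not handle them. Its proof simply takes $\mathbf{e}_T^0\sim p_0$ independent of $(\mathbf{e}_I,\mathbf{e}_T)$, so it effectively proves the independent-coupling version. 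Under the minibatch OT plan, whose cost includes $d_{\mathbb{S}}(\mathbf{e}_T^0,\mathbf{e}_T^1)$, the text noise conditioned on $\mathbf{e}_I^*$ is generally biased toward where $p(\cdot\mid\mathbf{e}_I^*)$ concentrates, so your fallback is the right one. Its ``for any $\mathbf{e}_I^*$'' is also only justified for $p_I$-almost every $\mathbf{e}_I^*$.

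The one step you should remove is the claim that $p_t$ has positive density on all of $\mathcal{M}$ for $t<1$. For fixed $\mathbf{z}_1$, the map $\mathbf{z}_0\mapsto\exp_{\mathbf{z}_1}\bigl((1-t)\log_{\mathbf{z}_1}\mathbf{z}_0\bigr)$ restricted to the complement of the cut locus is indeed a diffeomorphism. Its image, however, is the factorwise geodesic ball of radius $(1-t)\pi$ about $\mathbf{z}_1$, not $\mathcal{M}$ minus a null set. Hence $p_t$ is supported on the $(1-t)\pi$-neighbourhood of $\mathrm{supp}\,p$. For $t$ near $1$ this is a proper subset of $\mathcal{M}$ unless $p$ has full support, and a bounded density does not give that. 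Off $\mathrm{supp}\,p_t$ the loss places no constraint on the network, so the volume-a.e.\ reading of the theorem is false in general. The correct reading, implicit in the paper's proof and sufficient for the transport claim, is a.e.\ with respect to $p_t(\mathbf{z})\,dt$. For the conditional modes it is a.e.\ with respect to $p_t(\mathbf{x}\mid\mathbf{e}_I^*)\,dt$ on each fibre. With that reading your argument goes through as written.
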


\noindent\emph{Proof sketch.}
Restricted to mask $\mathbf{m}_{I \to T}$, the loss decouples: the image
block vanishes because the image stream is held fixed at $t=0$, and the text
block reduces to the standard Riemannian CFM objective on $\mathbb{S}^{d-1}$
with $\mathbf{e}_I$ as auxiliary input. The conditional CFM
identity~\citep{lipman2023flow, chen2024flow} then characterises the unique
pointwise minimiser as the marginal velocity transporting $p_0$ to
$p(\cdot \mid \mathbf{e}_I)$. The joint case follows analogously. Full proof
in Appendix~\ref{sec:proof_conditional}.\hfill$\square$

\begin{remark}[Velocity-field uniqueness]
The minimiser $\phi^\star$ need not be unique as a parameter vector, but the
induced velocity fields are: the marginal CFM velocity is the unique
pointwise minimiser of each reduced objective. Distinct parameter vectors
achieving the population minimum therefore induce the same velocity fields
almost everywhere.
\end{remark}

\begin{remark}[Parameter coherence across masks]
The masking scheme induces a multi-task objective in which each mask defines
an independent CFM target. The theorem states that at the joint population
minimum, each task is simultaneously at its own individual population
minimum---there is no trade-off between masks at the population level.
A single $\phi^\star$ therefore realises the joint flow and both cross-modal
conditional flows without any auxiliary consistency loss enforcing agreement
between them; this follows directly from the mask-conditioned loss
decoupling in the proof.
\end{remark}

\begin{remark}[Scope]
The theorem is a population-limit result; finite-sample, discretisation, and approximation errors are open theoretical directions, with step-count and probe-count ablations (Appendix~\ref{sec:supp_ablations}) serving as empirical proxies.
\end{remark}

\paragraph{Practical consequences.}
Two direct practical consequences follow.
All three log-densities are evaluable on the same trained model via
reverse-time integration under the corresponding mask.
The per-modality marginals follow via Bayes' identity,
\begin{equation}\label{eq:marginal-bayes}
    \log p_\phi(\mathbf{e}_I)
    \;=\;
    \log p_\phi(\mathbf{e}_I, \mathbf{e}_T)
    \;-\;
    \log p_\phi(\mathbf{e}_T \mid \mathbf{e}_I),
\end{equation}
and symmetrically for $\mathbf{e}_T$; this coherence enables the uncertainty
decomposition in Sec.~\ref{sec:uq:epistemic}.

\section{Uncertainty Quantification from the Joint Density}\label{sec:uq}
The trained joint density of Sec.~\ref{sec:method} delivers two uncertainty
estimates from a single model: a conditional retrieval entropy that captures
aleatoric ambiguity (Sec.~\ref{sec:uq:aleatoric}), and a marginal-typicality
epistemic score motivated by an exact chain-rule decomposition of the
joint NLL that isolates a structurally discriminative cross-modal term
(Sec.~\ref{sec:uq:epistemic}).

\subsection{Aleatoric Uncertainty: Retrieval Entropy}\label{sec:uq:aleatoric}

Cross-modal retrieval is intrinsically one-to-many: a given image is compatible with many semantically valid captions and a caption with many plausible images. Aleatoric uncertainty in this setting is the irreducible variability of valid targets given the input, representing the variability that no amount of additional training data can remove~\citep{kendall2017uncertainties}. The conditional posterior $p_\phi(\mathbf{e}_T\mid\mathbf{e}_I)$ exposed by the joint flow (Theorem~\ref{thm:conditional_reduction}) is the model's learned distribution over valid targets given the query; its Shannon entropy is therefore our aleatoric score. Mismatch between $p_\phi$ and the true conditional, arising e.g.\ from sparse coverage or distribution shift, would manifest as miscalibration of this score, and is scored separately via the typicality readout of Sec.~\ref{sec:uq:epistemic}.

In retrieval, we estimate this entropy through the gallery itself. Posterior samples drawn from the conditional flow are kernel-aggregated to a softmax distribution $\boldsymbol{\pi}=(\pi_1,\dots,\pi_N)$ over the $N$ gallery items, and we report its Shannon entropy as the aleatoric score,
\vspace{-0.3cm}
\begin{equation}\label{eq:retrieval-entropy}
    H \;=\; -\sum_{j=1}^{N} \pi_j \log \pi_j.
\end{equation}
\vspace{-0.05cm}
A symmetric construction with $p_\phi(\mathbf{e}_I\mid\mathbf{e}_T)$ yields the score for image-to-text retrieval. Low $H$ corresponds to a sharply constrained query (posterior mass on few candidates); high $H$ to one consistent with many plausible matches. We note that $H$ is the Shannon entropy of a discrete distribution $\boldsymbol{\pi}$ obtained by vMF kernel-density aggregation of $M$ posterior samples over the $N$-item gallery, not the differential entropy of the continuous posterior $p_\phi(\mathbf{e}_T\mid\mathbf{e}_I)$ itself; calibration accordingly depends on $\kappa$, $M$, and $N$, whose effects we analyse in Appendix~\ref{sec:supp:retrieval-entropy} and Sec.~\ref{sec:supp_ablations}.

\paragraph{Bayes-risk interpretation.}
The score~\eqref{eq:retrieval-entropy} has a decision-theoretic
interpretation. Treating the gallery as a finite classification problem: true
item $X \in [N]$, query $\mathbf{e}_I$ as observation, and $\boldsymbol{\pi}$
as the per-query posterior $P(X \mid \mathbf{e}_I)$; the MAP predictor
$\hat{y}=\arg\max_j\pi_j$ incurs $0/1$ Bayes risk $r=1-\max_j\pi_j$.
Applying Fano's inequality~\citep{cover2006elements} to this per-query
classification problem gives,
\begin{equation}\label{eq:fano}
    H(\boldsymbol{\pi}) \;\leq\; H_2(r) + r\log(N-1),
\end{equation}
where $H_2$ is binary entropy. The function $f(r)=H_2(r)+r\log(N-1)$ is
strictly increasing on $[0,(N-1)/N]$, so the bound is invertible: a query
with $H(\boldsymbol{\pi})\geq c$ satisfies $r \geq f^{-1}(c)$, yielding a
rigorous lower bound on MAP Bayes risk from entropy alone. The bound
achieves equality when the non-mode mass is spread uniformly, i.e.\
$\pi_j = r/(N-1)$ for all $j\neq\hat{y}$; for other posterior shapes, and for large galleries where $\log(N-1)$ dominates, the bound is conservative. Consequently, $H$ should be read as a monotone proxy for
posterior risk---it certifies a lower bound on $r$ but does not estimate $r$
precisely. We stress that $r$ is Bayes risk under $\boldsymbol{\pi}$,
which is a kernel-density approximation to $p_\phi$ on the gallery rather
than the true data conditional; whether $H$ tracks true retrieval
difficulty is an empirical question addressed in Sec.~\ref{sec:experiments}.

\subsection{Density-Based Epistemic Scores}\label{sec:uq:epistemic}

In the post-hoc setting we consider, the encoder is frozen and there is no Bayesian posterior over its parameters; the natural surrogate is \emph{typicality} under the learned density, following standard practice for likelihood-based OOD scoring~\citep{ju2026epistemicuncertaintyquantificationpretrained}. We use ``epistemic score'' in this density-based sense throughout and do not claim a Bayesian decomposition of total uncertainty.

\paragraph{Decomposition into typicality and cross-modal matching.}
The chain rule applied to the joint log-density gives an exact decomposition
that separates two categorically different signals:
\begin{equation}\label{eq:decomp}
    -\log p_\phi(\mathbf{e}_I, \mathbf{e}_T)
    \;=\;
    \underbrace{\bigl[-\log p_\phi(\mathbf{e}_I) - \log p_\phi(\mathbf{e}_T)\bigr]}_{\text{epistemic typicality}}
    \;\underbrace{-\,\mathrm{PMI}_\phi(\mathbf{e}_I, \mathbf{e}_T)}_{\text{cross-modal matching}},
\end{equation}
where $\mathrm{PMI}_\phi(\mathbf{e}_I, \mathbf{e}_T) :=
\log[p_\phi(\mathbf{e}_I,\mathbf{e}_T)/(p_\phi(\mathbf{e}_I)\,p_\phi(\mathbf{e}_T))]$
is the pointwise mutual information under the learned joint. The first term
measures how typical each embedding is under the learned per-modality
distributions: low typicality signals that the model has poor coverage of
that region, which is the standard density-based epistemic proxy. The
second term is the log-likelihood ratio between the joint and the product
of marginals: it measures the cross-modal \emph{compatibility} of a
specific image-text pair, and is structurally discriminative rather than
epistemic. The distinction is most concrete in zero-shot classification: $\mathrm{PMI}_\phi$ there measures image--class compatibility rather than distributional coverage---it is confidence-like, not epistemic---and is detrimental as a standalone score on every benchmark (Sec.~\ref{exp:epistemic}).
We consequently use the marginal typicality sum as the
epistemic score and validate this choice empirically in Sec.~\ref{exp:epistemic},
\begin{equation}\label{eq:epistemic-score}
    u_{\mathrm{ep}}(\mathbf{e}_I, \mathbf{e}_T)
    \;=\;
    -\log p_\phi(\mathbf{e}_I) - \log p_\phi(\mathbf{e}_T).
\end{equation}
\paragraph{Estimation via the joint flow.}
Both marginals in Eq.~\eqref{eq:epistemic-score} are obtained from the same
trained model without additional parameters or retraining. Using
Eq.~\eqref{eq:marginal-bayes},
\begin{equation}
    -\log p_\phi(\mathbf{e}_I)
    \;=\;
    -\log p_\phi(\mathbf{e}_I, \mathbf{e}_T)
    \;+\;
    \log p_\phi(\mathbf{e}_T \mid \mathbf{e}_I),
\end{equation}
and symmetrically for $-\log p_\phi(\mathbf{e}_T)$. Computing
$u_{\mathrm{ep}}$ therefore requires three probability flow ODE solves on
the same model: one joint and two conditional, each via reverse-time
integration of the manifold change-of-variables identity with a
tangent-projected Hutchinson trace estimator. These are numerical approximations whose quality depends on solver step count and probe count; we study their sensitivity in Appendix~\ref{sec:supp_ablations}; Appendix~\ref{sec:joint_logdensity} provides the algorithm and an unbiasedness lemma.

\section{Experiments}\label{sec:experiments}

We evaluate GeoFlowVLM as a post-hoc probabilistic layer operating entirely in embedding space without modifying the underlying encoder. The experiments in the main text are instantiated on a frozen OpenCLIP ViT-H/14 (\texttt{laion2b\_s32b\_b79k})~\citep{clip,cherti2023reproducible} backbone, one of the strongest publicly available hyperspherical dual-encoder VLMs; we additionally evaluate on SigLIP (ViT-L-16-SigLIP-384)~\citep{zhai2023sigmoid} in Appendix~\ref{sec:supp_ablations} to probe backbone generality; evaluating at larger scales and on EVA-CLIP is left to future work. Following the ALIGN paradigm~\citep{jia2021scaling}, which demonstrated that large-scale noisy web supervision can produce strong vision-language representations, we train all models once on the Conceptual Captions dataset (CC-1M; 1M image--text pairs) with no task-specific fine-tuning, then evaluate zero-shot on seven held-out benchmarks spanning three retrieval datasets (both retrieval directions) and four zero-shot classification datasets, covering in-distribution, fine-grained, and viewpoint-shifted domains. Full details of the optimisation, architecture, and training curriculum are provided in the supplementary material (Sec. ~\ref{sec:supp_hyperparams}). We probe two complementary uncertainty readouts from the single trained model:

\textbf{(Q1)} Does GeoFlowVLM's retrieval entropy, derived from the
conditional posteriors $p_\phi(\mathbf{e}_T \mid \mathbf{e}_I)$ and
$p_\phi(\mathbf{e}_I \mid \mathbf{e}_T)$, reliably predict query difficulty
in both retrieval directions across unseen datasets?

\textbf{(Q2)} Does the marginal typicality sum
$u_{\mathrm{ep}} = -\log p_\phi(\mathbf{e}_I) - \log p_\phi(\mathbf{e}_T)$
serve as a reliable density-based epistemic score, yielding monotonically
increasing selective accuracy as coverage decreases across unseen datasets?

\subsection{Retrieval Entropy as Aleatoric Uncertainty (Q1)}\label{exp:aleatoric}

\textbf{Q1} probes whether the retrieval entropy $H$ defined in Sec.~\ref{sec:uq:aleatoric}, computed from GeoFlowVLM's conditional posteriors, is a calibrated estimator of cross-modal aleatoric ambiguity that tracks retrieval difficulty symmetrically in both directions.

\textbf{Evaluation setup.}
Following~\citet{probVLM_frozen_embeddings}, queries are ranked by $H$ and
partitioned into ten equal-frequency bins, with Recall@1 computed within
each bin. A calibrated score produces a monotone decrease from low to high
entropy bins, summarised by Spearman $\rho$ and $R^2$ of a linear fit to
bin-wise error. We compare with ProbVLM~\citep{probVLM_frozen_embeddings},
AsymVLM-PSD and AsymVLM-vMF~\citep{ju2025exploiting}, and
GroVE~\citep{venkataramanan2025probabilistic}, all trained on CC1M and evaluated
zero-shot on MS-COCO, Flickr, and CUB-200. Dataset and baseline details are available in Appendices~\ref{sec:supp_datasets} and~\ref{sec:supp_baselines}.
\vspace{-0.3cm}

\begin{figure}[h]
    \centering
    \includegraphics[width=0.75\textwidth]{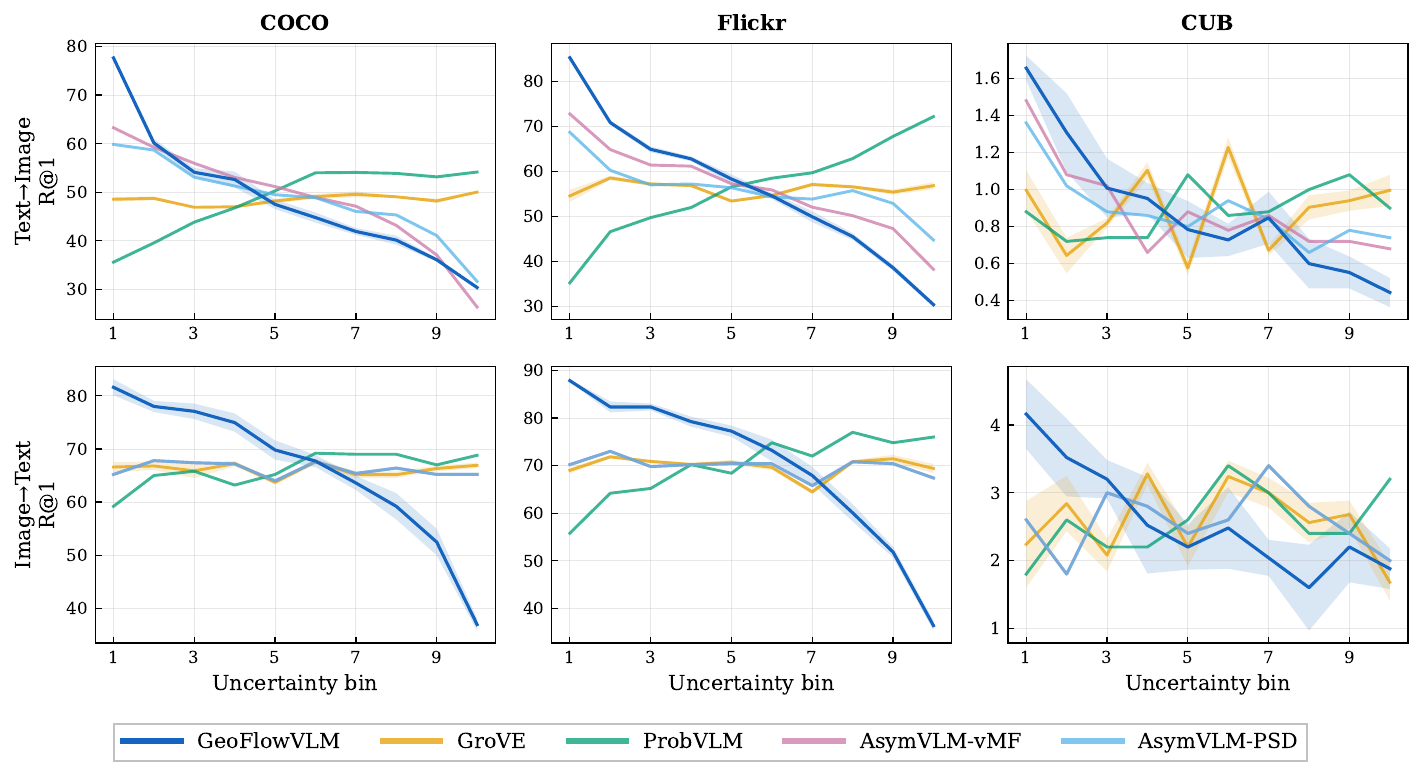}
    \caption{\textbf{Aleatoric uncertainty calibration.} \(R@1\) across uncertainty bins
ordered from low to high uncertainty for T2I (top) and I2T (bottom); curves averaged over 5 random-seed runs with shaded \(\pm 1\) standard deviation. GeoFlowVLM shows a stronger and more consistent monotonic
decrease than all baselines across both retrieval directions and all datasets.}
    \label{fig:calibration}
\end{figure}

\begin{table}[h]
\centering
\caption{\textbf{Aleatoric uncertainty calibration.} Spearman $S$ ($\downarrow$) and $R^2$ ($\uparrow$) between predicted aleatoric uncertainty and Recall@1, averaged over 5 independent runs. Ideal: $S=-1$, $R^2=1$. \textbf{Bold}: best; \underline{underline}: second-best.}
\label{tab:uncertainty_calibration}
\scriptsize
\setlength{\tabcolsep}{7pt}
\renewcommand{\arraystretch}{1.1}
\begin{tabular}{ll cc cc cc}
\toprule
& & \multicolumn{2}{c}{\textbf{Flickr}}
  & \multicolumn{2}{c}{\textbf{MS-COCO}}
  & \multicolumn{2}{c}{\textbf{CUB}} \\
\cmidrule(lr){3-4}\cmidrule(lr){5-6}\cmidrule(lr){7-8}
& \textbf{Method}
  & $S\downarrow$ & $R^{2}\uparrow$
  & $S\downarrow$ & $R^{2}\uparrow$
  & $S\downarrow$ & $R^{2}\uparrow$ \\
\midrule
\multirow{5}{*}{\rotatebox[origin=c]{90}{\scriptsize Image$\rightarrow$Text}}
  & ProbVLM
    & $+0.924$ & \underline{$0.818$}
    & $+0.717$ & \underline{$0.621$}
    & $+0.538$ & \underline{$0.327$} \\
  & AsymVLM$_{\mathrm{PSD}}$
    & \underline{$-0.148$} & $0.235$
    & \underline{$-0.350$} & $0.123$
    & \underline{$-0.073$} & $0.000$ \\
  & AsymVLM$_{\mathrm{vMF}}$
    & \underline{$-0.148$} & $0.235$
    & \underline{$-0.350$} & $0.123$
    & \underline{$-0.073$} & $0.000$ \\
  & GroVE
    & $-0.106$ & $0.046$
    & $-0.105$ & $0.025$
    & $-0.043$ & $0.061$ \\
  & \cellcolor{ourblue} GeoFlowVLM
    & \cellcolor{ourblue} $\mathbf{-0.993}$ & \cellcolor{ourblue} $\mathbf{0.879}$
    & \cellcolor{ourblue} $\mathbf{-0.981}$ & \cellcolor{ourblue} $\mathbf{0.882}$
    & \cellcolor{ourblue} $\mathbf{-0.723}$ & \cellcolor{ourblue} $\mathbf{0.576}$ \\
\midrule
\multirow{5}{*}{\rotatebox[origin=c]{90}{\scriptsize Text$\rightarrow$Image}}
  & ProbVLM
    & $+1.000$ & $0.949$
    & $+0.891$ & $0.828$
    & $+0.636$ & $0.335$ \\
  & AsymVLM$_{\mathrm{PSD}}$
    & \underline{$-0.952$} & $0.759$
    & $\mathbf{-1.000}$ & \underline{$0.916$}
    & \underline{$-0.867$} & \underline{$0.612$} \\
  & AsymVLM$_{\mathrm{vMF}}$
    & $\mathbf{-1.000}$ & \underline{$0.950$}
    & $\mathbf{-1.000}$ & $\mathbf{0.926}$
    & $-0.729$ & \underline{$0.612$} \\
  & GroVE
    & $-0.132$ & $0.036$
    & $+0.448$ & $0.247$
    & $+0.152$ & $0.026$ \\
  & \cellcolor{ourblue} GeoFlowVLM
    & \cellcolor{ourblue} $\mathbf{-1.000}$ & \cellcolor{ourblue} $\mathbf{0.963}$
    & \cellcolor{ourblue} \underline{$-0.993$} & \cellcolor{ourblue} $0.895$
    & \cellcolor{ourblue} $\mathbf{-0.898}$ & \cellcolor{ourblue} $\mathbf{0.767}$ \\
\bottomrule
\end{tabular}
\end{table}

\textbf{Results.}
Table~\ref{tab:uncertainty_calibration} and Figure~\ref{fig:calibration}
show that GeoFlowVLM achieves negative monotonicity across all datasets
and both directions, with the best $\rho$ and $R^2$ in I2T and on
Flickr30k and CUB-200 in T2I; on MS-COCO T2I, AsymVLM-vMF's vMF prior
yields higher $R^2$ ($0.926$ vs.\ $0.895$). ProbVLM's $\rho$ is strongly
positive in I2T, indicating its uncertainty rises on correctly retrieved
queries; AsymVLM and GroVE show unstable trends in some cases, with $R^2$
collapsing on CUB-200. The bin-wise curves confirm a monotone decrease for
GeoFlowVLM while baselines plateau, oscillate, or invert. These results
indicate that the conditional posterior entropy is a calibrated and
direction-symmetric estimator of cross-modal aleatoric uncertainty that
transfers from web-caption training to fine-grained OOD retrieval.

\subsection{Likelihood-Based Epistemic Uncertainty (Q2)}\label{exp:epistemic}

Density-based epistemic uncertainty reflects the absence of support for a
test sample under the joint distribution learned during training. As
GeoFlowVLM is trained on CC1M and evaluated zero-shot downstream, samples with different visual statistics or label spaces are
likely to lie in weakly supported regions of $p_\phi$, motivating
$u_{\mathrm{ep}}$ as an epistemic signal. We note that this is a
typicality-based notion of epistemic uncertainty; we do not estimate a
posterior over model parameters.

\textbf{Evaluation setup.}
Following the protocol of REPVLM~\citep{ju2026epistemicuncertaintyquantificationpretrained}, samples are ranked by their epistemic score in ascending order and filtered
at coverage levels $c \in \{100\%, 90\%, \ldots, 10\%\}$, retaining the
fraction $c$ of least uncertain samples at each level. A useful epistemic
score should yield monotonically increasing accuracy as coverage decreases,
since rejecting high-uncertainty samples should preferentially remove
misclassified examples. All methods are trained on CC1M and evaluated
zero-shot on ImageNet-1K~\citep{deng2009imagenet},
Food-101~\citep{bossard2014food}, CIFAR-100~\citep{krizhevsky2009learning}, and
ObjectNet~\citep{barbu2019objectnet} using $10{,}000$ samples per
dataset. We focus on density-based epistemic scores, which measure
typicality under a learned distribution, and compare against
REPVLM~\citep{ju2026epistemicuncertaintyquantificationpretrained},
which scores negative log-density on the hyperspherical embedding manifold,
and ProbVLM~\citep{probVLM_frozen_embeddings}, which estimates
uncertainty through MC dropout over its probabilistic adapter.
BayesVLM~\citep{baumann2024post} obtains a Gaussian covariance over
embeddings via last-layer Laplace approximation; because it does not
expose a pointwise log-density score for individual test embeddings, it
does not admit the typicality-based evaluation protocol used here and
is not included. Confidence-based proxies such as maximum cosine
similarity over class prompts measure prediction sharpness rather than
distributional support and address a different question; we leave such
comparisons to future work.

\textbf{Component analysis.}
We validate the chain-rule decomposition in Eq.~\eqref{eq:decomp} by
evaluating each term as a standalone epistemic score across the four
datasets (Figure~\ref{fig:eps_components} and Table~\ref{tab:eps_components}).
Both marginal-typicality terms produce strong signals: text typicality
$-\log p_\phi(\mathbf{e}_T)$ leads on three of four benchmarks while image
typicality $-\log p_\phi(\mathbf{e}_I)$ leads on Food-101, with small
absolute differences in all but one case. The exception is ObjectNet,
where viewpoint shift moves image embeddings away from the training
support and image typicality drops sharply ($0.600$ vs.\ $0.691$ for
text). The PMI term is, by a clear margin, the weakest signal on every
benchmark, and is the only term whose selective-prediction curves trend
\emph{downward} (Figure~\ref{fig:eps_components})---it preferentially
rejects correct predictions, consistent with its structurally
discriminative rather than epistemic status (Sec.~\ref{sec:uq:epistemic}).
Because the two marginals individually carry the epistemic signal but
neither uniformly dominates, we use their sum $u_{\mathrm{ep}}$ for all
benchmark comparisons below; the sum also enhances robustness against the
modality-specific failure mode evident on ObjectNet.

\begin{figure}[h]
    \centering
    \includegraphics[width=0.9\linewidth]{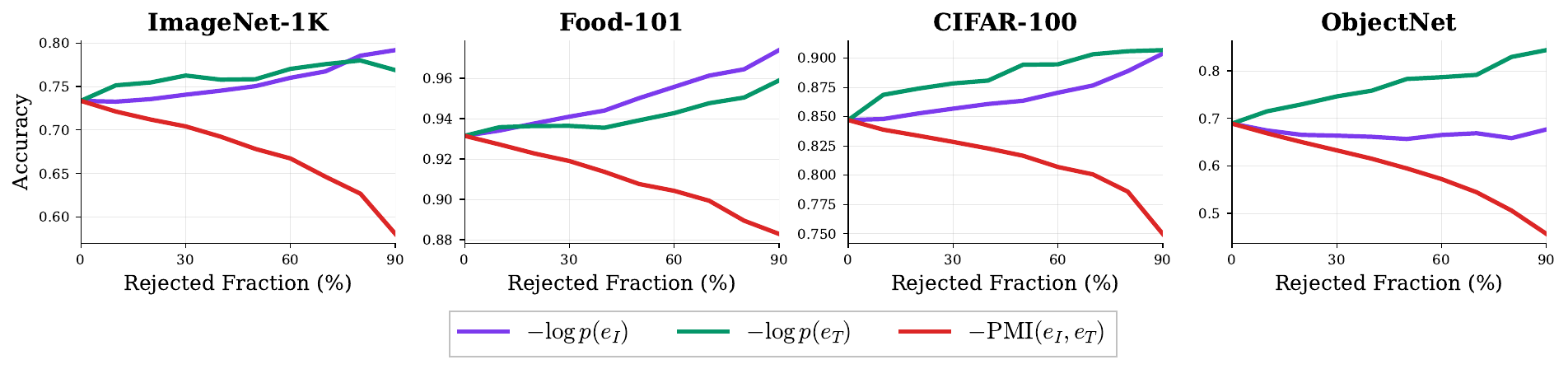}
    \caption{
        \textbf{Epistemic component analysis.}
        Selective prediction accuracy vs.\ rejected fraction. Each curve
        uses one term from Eq.~\eqref{eq:decomp} as a standalone
        epistemic score.
    }
    \label{fig:eps_components}
\end{figure}

\begin{figure}[tbp]
    \centering
    \includegraphics[width=0.9\linewidth]{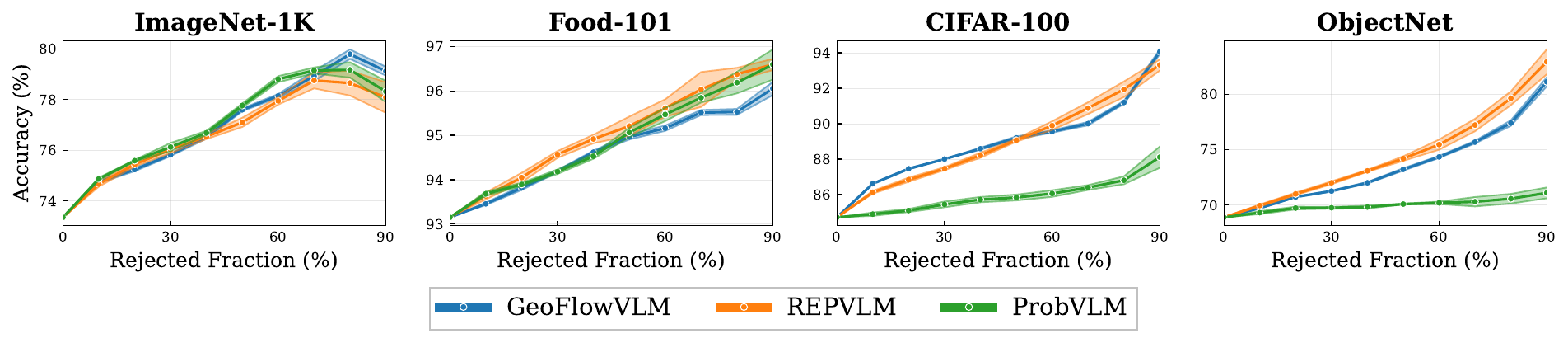}
    \caption{\textbf{Epistemic uncertainty: selective prediction.}
    Accuracy vs.\ rejected fraction on four zero-shot classification
    benchmarks.}
    \label{fig:epistemic-uncertainty}
\end{figure}

\textbf{Results.}
Since no single component dominates uniformly across datasets
(Table~\ref{tab:eps_components}), we use the marginal typicality sum
$u_{\mathrm{ep}} = -\log p_\phi(\mathbf{e}_I) - \log p_\phi(\mathbf{e}_T)$
as the epistemic score for all benchmark comparisons.
Figure~\ref{fig:epistemic-uncertainty} shows that GeoFlowVLM produces a
consistent monotone increase in accuracy with rejected fraction across all
four datasets. Performance is numerically comparable to REPVLM across most
benchmarks with neither method dominating uniformly. We do not claim a numerical advantage over REPVLM on this
task: the epistemic contribution of GeoFlowVLM is primarily analytical.
The chain-rule decomposition (Eq.~\ref{eq:decomp}) explains
\emph{why} marginal typicality is the correct epistemic signal and why PMI
is not---providing principled grounding for the per-modality scoring approach
that REPVLM uses empirically. Concretely, marginals derived from the jointly
trained model are coherent by construction (via Eq.~\ref{eq:marginal-bayes}),
whereas REPVLM trains two independent flows whose individual densities carry
no cross-modal consistency guarantee; the same joint model delivers
aleatoric uncertainty without additional training or parameters.
ObjectNet provides a case where this structural difference matters
empirically: under SigLIP's more isotropic embeddings, GeoFlowVLM maintains
a stronger advantage over REPVLM on ObjectNet specifically
(Appendix~\ref{sec:supp_ablations}). Viewpoint-shifted images may not appear
atypical under an image marginal trained in isolation---SigLIP's flatter
geometry suppresses that signal---but their departure from typical
training-time image--text co-occurrences remains detectable via the jointly
trained marginal, which inherits cross-modal structure that a per-modality
flow cannot encode.

\begin{table}[h]
\centering
\caption{
    \textbf{Epistemic component analysis}: Area under the 
    selective accuracy curve (AUSAC $\uparrow$).
}
\label{tab:eps_components}
\scriptsize
\setlength{\tabcolsep}{8pt}
\begin{tabular}{lcccc}
\toprule
\textbf{Component}
    & \textbf{CIFAR-100}
    & \textbf{ImageNet}
    & \textbf{Food-101}
    & \textbf{ObjectNet} \\
\midrule
$-\log p_\phi(\mathbf{e}_I)$ \;[image typicality]
    & \underline{0.788} & \underline{0.678} & \textbf{0.854} & \underline{0.600} \\
$-\log p_\phi(\mathbf{e}_T)$ \;[text typicality]
    & \textbf{0.809} & \textbf{0.686} & \underline{0.847} & \textbf{0.691} \\
$-\mathrm{PMI}_\phi(\mathbf{e}_I,\mathbf{e}_T)$ \;[cross-modal]
    & 0.736 & 0.611 & 0.819 & 0.536 \\
\bottomrule
\end{tabular}
\end{table}
\vspace{-0.3cm}
\paragraph{Ablations.}\label{ablation_main}
We ablate four design choices (full results in
Appendix~\ref{sec:supp_ablations}). \textbf{(i)} A Euclidean baseline identical to GeoFlowVLM in architecture,
training data, and optimisation, but operating in $\mathbb{R}^d\times\mathbb{R}^d$
without sphere projection or geodesic interpolation---underperforms
GeoFlowVLM on both aleatoric and epistemic metrics across all datasets,
confirming hyperspherical geometry is necessary
(Appendix~\ref{sec:supp_geometry}).
\textbf{(ii)} Aleatoric calibration recovers sharply at $\lambda \geq 1$
and plateaus; we use $\lambda=3$
(Appendix~\ref{sec:supp_cfg}).
\textbf{(iii)} Reliable uncertainty estimates do not require many
integration steps or probes: quality is stable across
$N \in \{10,\ldots,50\}$ ODE steps
(Appendix~\ref{sec:supp_nll_steps}) and
$n_{\mathrm{probe}} \in \{1,\ldots,5\}$ Hutchinson probes
(Appendix~\ref{sec:supp_nll_probes}); we adopt $N=50$ and
$n_{\mathrm{probe}}=1$ as conservative defaults, although the ablations
show that $N$ as low as $10$ is sufficient at no measurable cost in
estimation quality. \textbf{(iv)} SigLIP (ViT-L-16-SigLIP-384) probes a qualitatively different embedding geometry: its sigmoid loss scores pairs independently without CLIP's cross-pair softmax denominator, producing more isotropic distributions on $\mathbb{S}^{d-1}$. The architecture is otherwise unchanged ($d=1024$): aleatoric calibration transfers consistently, while epistemic discrimination degrades on ImageNet-1K for density-based methods but not ProbVLM, attributed to flatter learned density and weaker typicality-based OOD contrast under more uniform embeddings (Appendix~\ref{sec:supp_siglip}).

\section{Conclusions, Limitations, Discussion and Broader Impact}\label{sec:conclusion}\label{sec:limit}
GeoFlowVLM demonstrates that a geometry-aware joint density model over frozen
hyperspherical VLM embeddings trained once on large-scale data and applied
zero-shot---suffices to recover both aleatoric and epistemic uncertainty across
diverse downstream tasks, addressing a gap left by methods that do not account for the underlying hyperspherical
geometry or treat modalities independently. The joint distribution on the product
hypersphere is strictly richer than any per-modality model: its conditional
posteriors expose cross-modal aleatoric ambiguity inaccessible to per-modality
flows, while the chain-rule decomposition of the joint NLL identifies marginal
typicality as the right epistemic signal, explains why PMI is not, and accounts
for the empirical effectiveness of approaches such as REPVLM. We hope this work
encourages joint embedding modelling as a general strategy for richer
probabilistic reasoning over vision-language representations.

As a post-hoc generative adapter, GeoFlowVLM requires one-time training on
large-scale data (unlike training-free approaches such as
BayesVLM~\citep{baumann2024post}) and three ODE solves per query for
$u_{\mathrm{ep}}$; both uncertainty estimates depend on kernel bandwidth, step
count, and proxy-corpus match, and calibration is bounded by encoder quality.
The epistemic scores are density-based rather than Bayesian---measuring
typicality under the learned joint, not parameter-posterior variance---and the
consistency theorem (Theorem~\ref{thm:conditional_reduction}) is a
population-limit result; quantitative finite-sample bounds remain open.
SigLIP experiments (Appendix~\ref{sec:supp_ablations}) confirm aleatoric
calibration transfers across backbones and GeoFlowVLM outperforms REPVLM on
three of four epistemic benchmarks; degradation of both density-based methods
on ImageNet-1K (while ProbVLM is unaffected) may be attributed to SigLIP's sigmoid objective producing more isotropic embeddings~\citep{wang2020contrastive,zhai2023sigmoid}, which flatten the learned density and weaken typicality-based OOD contrast. As the method operates over frozen
embeddings without generating new content, direct misuse risk is limited;
improved VLM uncertainty quantification has clear value in safety-critical
applications such as medical retrieval and autonomous systems.

\section*{Acknowledgments}

The computations and data handling were enabled by the Alvis resource provided by the Knut and Alice Wallenberg Foundation at Chalmers University of Technology, and by the National Academic Infrastructure for Supercomputing in Sweden (NAISS) through project NAISS 2026/4-691. Andreas Hellander and Prashant Singh acknowledge support from the Swedish Research Council through grant agreement nos.\ 2023-05167 and 2023-05593, respectively.

\bibliographystyle{unsrtnat}
\bibliography{references}

\appendix
\section{Supplementary Material}

\subsection{Datasets}
\label{sec:supp_datasets}

All models are trained on Conceptual Captions 1M (CC1M)~\citep{sharma2018conceptual}, a web-crawled
image-caption dataset providing one caption per image. Embeddings for all
datasets are precomputed using OpenCLIP with a ViT-H/14 backbone initialized
from the \texttt{laion2b\_s32b\_b79k} checkpoint, with standard OpenCLIP
image preprocessing and matching text tokenization -- and similarly for the SigLIP (ViT-L-16-SigLIP-384)~\citep{zhai2023sigmoid} backbone for additional experiments presented in Sec. \ref{sec:supp_siglip}. All embeddings are
$\ell_2$-normalized before use. Precomputing and caching embeddings allows
all training and evaluation to operate directly in the shared multimodal
embedding space without repeatedly running the encoder.

Table~\ref{tab:datasets} summarises all datasets. For \textbf{aleatoric
uncertainty} (Q1), we use three cross-modal retrieval benchmarks capped at
5{,}000 test samples each: MS-COCO~\citep{coco_dataset} and
Flickr~\citep{flickr_dataset} provide five captions per image, while
CUB-200~\citep{wah2011cub} provides ten human-written captions per
image~\citep{reed2016learning} and introduces fine-grained visual ambiguity
across 200 bird species alongside a domain shift from CC1M. For
\textbf{epistemic uncertainty} (Q2), we use four zero-shot classification
benchmarks capped at 10{,}000 test samples each: ImageNet-1K~\citep{deng2009imagenet},
Food-101~\citep{bossard2014food}, CIFAR-100~\citep{krizhevsky2009learning}, and
ObjectNet~\citep{barbu2019objectnet}.

\begin{table}[h]
\centering
\small
\caption{Training and evaluation datasets. All models are trained on CC1M.
Test sizes are capped at the values shown.}
\label{tab:datasets}
\begin{tabular}{llcccc}
\toprule
\textbf{Dataset} & \textbf{Split} & \textbf{Task} & \textbf{Classes} 
    & \textbf{Captions/Image} & \textbf{Samples (capped)} \\
\midrule
CC1M             & Train & --             & --          & 1  & 1{,}000{,}000 \\
\midrule
MS-COCO          & Test  & Retrieval      & --          & 5  & 5{,}000 \\
Flickr        & Test  & Retrieval      & --          & 5  & 5{,}000 \\
CUB-200          & Test  & Retrieval      & 200         & 10 & 5{,}000 \\
\midrule
ImageNet-1K      & Test  & Classification & 1{,}000     & -- & 10{,}000 \\
Food-101         & Test  & Classification & 101         & -- & 10{,}000 \\
CIFAR-100        & Test  & Classification & 100         & -- & 10{,}000 \\
ObjectNet        & Test  & Classification & 313         & -- & 10{,}000 \\
\bottomrule
\end{tabular}
\end{table}

\begin{table}[t]
\centering
\small
\caption{Summary of baseline method families and their relevance to our two evaluation settings.}
\label{tab:baselines}
\begin{tabular}{lllcc}
\toprule
\textbf{Method} & \textbf{Distribution / Model} & \textbf{Geometry}
    & \textbf{Q1} & \textbf{Q2} \\
\midrule
ProbVLM      & Generalized Gaussian adapter      & Euclidean     & \checkmark & \checkmark \\
AsymVLM-vMF  & Directional distribution (vMF)    & Hypersphere   & \checkmark & -- \\
AsymVLM-PS   & Directional distribution (PS)     & Hypersphere   & \checkmark & -- \\
GroVE        & Shared-latent GPLVM with sparse GP & Euclidean    & \checkmark & -- \\
REPVLM       & Conditional Riemannian flow density & Hypersphere  & --         & \checkmark \\
\bottomrule
\end{tabular}
\end{table}

\subsection{Baselines}
\label{sec:supp_baselines}

We compare against four post-hoc probabilistic baselines for frozen
vision-language models. The descriptions below follow the original papers;
when used in our experiments, these methods are instantiated on frozen CLIP
embeddings under a common training protocol for fair comparison.
Table~\ref{tab:baselines} summarises the main modeling choices most relevant to
our Q1 and Q2 evaluations.

\paragraph{ProbVLM~\citep{probVLM_frozen_embeddings}}
ProbVLM is a post-hoc probabilistic adapter that converts deterministic frozen
VLM embeddings into probabilistic embeddings through separate image and text
adaptors. The method can model both aleatoric and epistemic uncertainty. It
captures aleatoric uncertainty by predicting the parameters of a heteroscedastic
Generalized Gaussian Distribution (GGD) for each embedding, while training
combines intra-modal reconstruction with cross-modal alignment. It captures
epistemic uncertainty by keeping dropout active at inference and measuring the
variance across multiple stochastic forward passes through the adaptors.

\paragraph{AsymVLM~\citep{ju2025exploiting}}
AsymVLM is a post-hoc probabilistic adaptation method designed for the
hyperspherical geometry of pretrained VLM embeddings. Its main formulation is
asymmetric: text embeddings are modeled probabilistically using directional
distributions on the unit hypersphere, while image embeddings remain
deterministic. The paper studies two principal directional choices, von
Mises--Fisher (vMF) and Power Spherical (PS), and optimizes a likelihood-based
objective derived from an InfoNCE-style formulation. AsymVLM is primarily
designed to model aleatoric uncertainty in language rather than density-based
epistemic uncertainty.

\paragraph{GroVE~\citep{venkataramanan2025probabilistic}}
GroVE is a post-hoc probabilistic embedding method based on a multimodal
Gaussian Process Latent Variable Model (GPLVM). It learns a shared
low-dimensional latent representation for each paired image--text example and
uses two sparse variational Gaussian Processes, one per modality, to map this
latent representation back to the observed embedding spaces. GroVE primarily
models aleatoric uncertainty, capturing ambiguity in image--text
correspondence through a predictive Gaussian embedding whose covariance arises
from uncertainty in the inferred latent representation and the GP prediction.

\paragraph{REPVLM~\citep{ju2026epistemicuncertaintyquantificationpretrained}}
REPVLM explicitly models density-based epistemic uncertainty. It learns the
conditional embedding density $p(\mathbf{z}\mid c)$ on the hypersphere using
conditional Riemannian flow matching, where $\mathbf{z}$ is a pretrained VLM
embedding and $c$ is a discrete modality label indicating whether the
embedding comes from the image or text encoder. Its epistemic uncertainty
score is the conditional negative log-likelihood $-\log p(\mathbf{z}\mid c)$,
so embeddings in low-density regions of the manifold are interpreted as
regions of model ignorance.


\section{GeoFlowVLM Architecture}\label{sec:supp_architecture}

\begin{figure}[h]
\centering
\includegraphics[width=\linewidth]{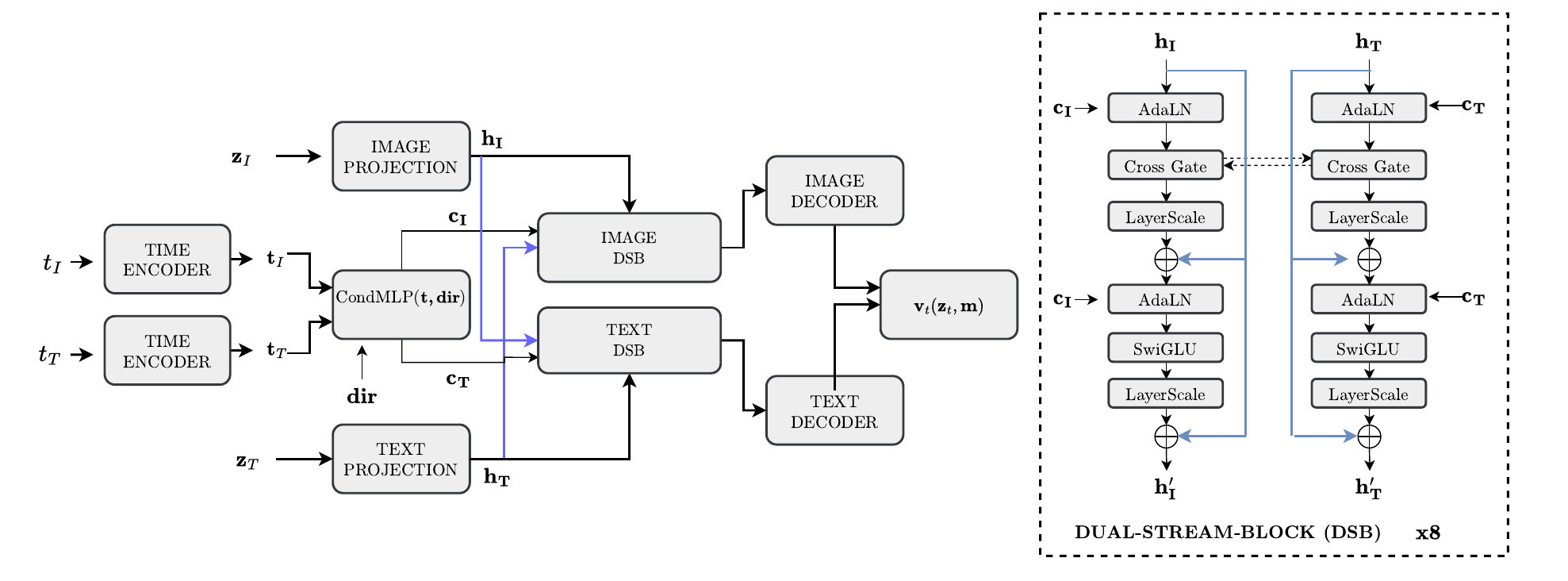}
\caption{
\textbf{GeoFlowVLM architecture.} Image and text CLIP embeddings $\mathbf{z}_I$ and
$\mathbf{z}_T$ are projected into a shared $H$-dimensional latent space. Per-stream
conditioning vectors $\mathbf{c}_I, \mathbf{c}_T$ derived from flow times $t_I, t_T$
and direction $\mathrm{dir}$ are shared across eight DualStreamBlocks (DSBs), each
refining both streams via AdaLN-Zero, gated cross-modal interaction, and SwiGLU.
Stream decoders produce the final velocity $\mathbf{v}_t^{\phi}(\mathbf{z}_t)$.}
\label{fig:geoflowvlm_architecture}
\end{figure}

The joint velocity field $\mathbf{v}_t^{\phi}(\mathbf{z}_t, \mathbf{m})$ is parameterised by a symmetric dual-stream architecture, illustrated in Figure~\ref{fig:geoflowvlm_architecture}. The noisy joint embedding $\mathbf{z}_t$ is split into $\mathbf{z}_I$ and $\mathbf{z}_T$, each projected into a shared $H$-dimensional latent space by modality-specific Image and Text Projection encoders, each consisting of a linear layer followed by LayerNorm. In parallel, the per-stream flow times $t_I, t_T$ and the discrete flow direction $\mathrm{dir} \in \{0, 1, 2\}$ are processed by the CondMLP module: each time signal is encoded via Random Fourier Features~\citep{rahimi2007random} with frozen random frequencies, providing broader spectral coverage over $t \in [0,1]$ than a fixed sinusoidal grid, and summed with a learned direction embedding before being passed through a per-stream two-layer MLP with SiLU activations. This produces conditioning vectors $\mathbf{c}_I, \mathbf{c}_T \in \mathbb{R}^H$ that are computed once and shared across all subsequent blocks.

The projected hidden states $\mathbf{h}_I, \mathbf{h}_T$ are then refined by $L$ stacked DualStreamBlocks (DSBs). Each DSB applies two sequential sub-layers to both streams. In the first sub-layer, each stream hidden state is modulated by its conditioning vector via DiT-style AdaLN-Zero~\citep{peebles2023scalable}, which applies an affine-free LayerNorm followed by a learned shift and scale derived from $\mathbf{c}_I$ or $\mathbf{c}_T$, with the projection zero-initialised so that conditioning begins as the identity and is learned incrementally. The modulated state is then passed through a multi-head Cross Gate, where the hidden space is partitioned into $n_{\mathrm{heads}}$ independent subspaces and in each subspace the receiving stream selectively absorbs information from the sending stream:
\begin{equation}
    \mathrm{CrossGate}(\mathbf{x}, \mathbf{s}) = \sigma\!\left(W_{\mathrm{gate}}\,\mathbf{x}\right) \odot W_{\mathrm{val}}\,\mathbf{s},
\end{equation}
where $\sigma$ is the sigmoid function, $W_{\mathrm{gate}}$ operates on the receiving stream to produce a data-dependent gate, and $W_{\mathrm{val}}$ transforms the sending stream independently. This enforces the inductive bias that CLIP features factor into independent semantic subspaces while allowing asymmetric cross-modal alignment. All $W_{\mathrm{val}}$ projections are zero-initialised for stability. The cross-gate output is scaled by LayerScale~\citep{touvron2021going} initialised at $10^{-4}$ and added back to the stream via a residual connection. In the second sub-layer, another AdaLN-Zero modulates the updated hidden state before a SwiGLU feed-forward network~\citep{shazeer2020glu}, which computes $\mathrm{SiLU}(W_{\mathrm{gate}}\mathbf{x}) \odot W_{\mathrm{val}}\mathbf{x}$ and consistently outperforms GELU at equal parameter count, again followed by a LayerScale-weighted residual connection.

After all DSBs, the Image and Text Decoders map each stream through a LayerNorm and a zero-initialised linear projection back to the original embedding dimension $d$. The two outputs are concatenated to form the full joint velocity $\mathbf{v}_t^{\phi} \in \mathbb{R}^{2d}$. Zero initialisation of the decoders ensures near-zero initial velocities, preventing the model from displacing embeddings off $\mathcal{M}$ at the start of training and promoting stable geodesic transport.

\subsection{Classifier-Free Guidance}\label{sec:supp:cfg}
For conditional generation under either of the two conditional masks, classifier-free guidance~\citep{ho2021classifierfree} can be applied at inference by linearly interpolating between conditional and unconditional velocity fields,
\begin{equation}
    \tilde{\mathbf{v}}_t^{\phi} \;=\; \mathbf{v}_t^{\phi}(\mathbf{z}_t,\mathbf{m},\mathbf{z}_1^{\mathrm{cond}})
    \;+\; \lambda\bigl(\mathbf{v}_t^{\phi}(\mathbf{z}_t,\mathbf{m},\mathbf{z}_1^{\mathrm{cond}}) - \mathbf{v}_t^{\phi}(\mathbf{z}_t,\mathbf{m},\mathbf{z}_0^{\mathrm{cond}})\bigr),
\end{equation}
where $\lambda\geq 0$ is the guidance scale and the unconditional field is obtained by substituting the conditioning stream with its OT-coupled noise source $\mathbf{z}_0^{\mathrm{cond}}$, consistent with the CFG dropout used during training. Increasing $\lambda$ sharpens the generated distribution towards the conditioning signal.

\section{Hyperparameters}
\label{sec:supp_hyperparams}

\subsection{GeoFlowVLM Hyperparameters}
GeoFlowVLM is trained once on Conceptual Captions (CC1M), capped at
\(1{,}000{,}000\) image--text pairs. All experiments use frozen CLIP
ViT-H/14 embeddings with image and text dimensionality \(1024\). We use a
single hyperparameter configuration for all downstream evaluations and do
not perform dataset-specific tuning.

Training uses AdamW with a cosine learning-rate schedule and linear warmup.
To stabilize joint density learning, we use a three-stage curriculum over the
mask distribution. In the first stage, the model is trained only on
conditional flow tasks, which provides a stable initialization for learning
cross-modal conditional structure. In the second stage, the probability of
joint modeling is increased linearly, avoiding an abrupt shift in the training
objective that can degrade the already learned conditional behavior. In the
final stage, the joint probability is fixed at its target value, so the model
primarily learns the full image--text joint distribution while continuing to
observe conditional training masks. Classifier-free guidance dropout is applied
during training by replacing the conditioning information with an unconditional
mask with probability \(p_{\mathrm{uncond}}\). Full hyperparameter details are provided in Table~\ref{tab:supp_hyperparams}.

\textbf{Note:} All experiments were run on nodes with Intel Icelake CPUs (64 cores,
512GB RAM) and 4$\times$NVIDIA A100 GPUs (40GB each) on an HPC cluster.

\subsection{Baseline Hyperparameters}

All baseline methods follow the same training protocol as GeoFlowVLM: each is trained on the CC1M training split using frozen CLIP ViT-H/14 image and text embeddings of dimension $d{=}1024$, with a single configuration shared across all evaluation datasets and no per-dataset tuning. The learning rate schedule is cosine annealing with $1{,}000$ linear warmup steps and $120{,}000$ total training steps. Validation is performed every $10{,}000$ steps over $20$ validation batches, and the checkpoint with the lowest validation loss is retained for evaluation.

\textbf{ProbVLM}~\citep{probVLM_frozen_embeddings} is optimised with AdamW at learning rate $1{\times}10^{-4}$, weight decay $5{\times}10^{-4}$, and batch size $2048$, using a probabilistic adapter with hidden dimension $2048$ (twice the embedding dimension).

\textbf{AsymVLM-vMF and AsymVLM-PSD}~\citep{ju2025exploiting} are both optimised with SGD at learning rate $1{\times}10^{-2}$, weight decay $5{\times}10^{-4}$, momentum $0.9$, and batch size $2048$. Following the asymmetric design of the original method, only the text branch is adapted while the image embedding remains deterministic; the text adaptor is a four-layer MLP with ReLU activations and hidden dimension $2048$ (twice the embedding dimension).

\textbf{GroVE}~\citep{venkataramanan2025probabilistic} is optimised with Adam at learning rate $1{\times}10^{-5}$ and a batch size of $128$ to accommodate the cost of variational Gaussian process inference. The model uses sparse variational Gaussian process likelihoods with $250$ inducing points per modality and dataset-specific latent dimensions ($5$ for MS-COCO and Flickr30k, $10$ for CUB-200), with loss weights $\lambda_1 = 0.01$ and $\lambda_2 = 400.0$.

\textbf{REPVLM}~\citep{ju2026epistemicuncertaintyquantificationpretrained} is optimised with AdamW at learning rate $1{\times}10^{-5}$, weight decay $1{\times}10^{-5}$, and batch size $2048$. The conditional Riemannian flow network is a six-block residual MLP with hidden dimension $2048$, AdaLN-modulated time conditioning via sinusoidal embeddings, and a learnable modality embedding distinguishing the image and text streams.

\begin{table}[t]
\centering
\caption{\textbf{Training configuration for GeoFlowVLM.}
All experiments use the same configuration without dataset-specific tuning.}
\label{tab:supp_hyperparams}
\small
\setlength{\tabcolsep}{7pt}
\renewcommand{\arraystretch}{1.08}
\begin{tabular}{p{0.52\linewidth}p{0.34\linewidth}}
\toprule
\textbf{Parameter} & \textbf{Value} \\
\midrule

\multicolumn{2}{l}{\textbf{\textit{Optimisation}}} \\
\cmidrule(lr){1-2}
Optimiser & AdamW \\
Learning rate & \(6 \times 10^{-4}\) \\
Weight decay & \(1 \times 10^{-3}\) \\
Gradient clipping & \(1.0\) \\
Learning-rate schedule & cosine decay \\
Warmup steps & \(4{,}000\) \\
Total training steps & \(120{,}000\) \\
Batch size & \(8{,}192\) \\

\addlinespace[2pt]
\multicolumn{2}{l}{\textbf{\textit{Architecture}}} \\
\cmidrule(lr){1-2}
Hidden dimension & \(512\) \\
Number of dual stream blocks (depth) & \(8\) \\
Cross modal gating heads per block & \(4\) \\
Dropout & \(0.05\) \\

\addlinespace[2pt]
\multicolumn{2}{l}{\textbf{\textit{Training curriculum}}} \\
\cmidrule(lr){1-2}
Phase 1: conditional-only & \(0.15\,T = 18,\!000\) steps \\
Phase 2: joint-probability ramp & \(0.15\,T = 18,\!000\) steps \\
Phase 3: target joint probability & \(p_{\mathrm{joint}}=0.40\) \\
Classifier-free guidance dropout & \(p_{\mathrm{uncond}}=0.10\) \\

\addlinespace[2pt]
\multicolumn{2}{l}{\textbf{\textit{Inference (joint NLL)}}} \\
\cmidrule(lr){1-2}
ODE solver & fixed-step Euler \\
Integration steps & \(50\) \\
Hutchinson probes ($K$) & \(1\) \\

\addlinespace[2pt]
\multicolumn{2}{l}{\textbf{\textit{Inference (retrieval entropy)}}} \\
\cmidrule(lr){1-2}
Posterior samples ($M$) & \(50\) \\
vMF kernel concentration ($\kappa$) & \(100\) \\
Guidance scale ($\lambda$) & \(3\) \\

\addlinespace[2pt]
\multicolumn{2}{l}{\textbf{\textit{Validation and early stopping}}} \\
\cmidrule(lr){1-2}
Validation frequency & every \(2{,}000\) steps \\
Validation batches & \(5\) \\
Patience & \(100\) validations \\
Minimum absolute improvement & \(10^{-6}\) \\
Minimum relative improvement & \(0.005\) \\
\bottomrule
\end{tabular}
\end{table}


\subsection{Product Hypersphere Geometry and Masked CFM Mechanics}\label{sec:supp:product-geometry}
Because CLIP embeddings are $\ell_2$-normalized, each image embedding
$\mathbf{e}_I$ and text embedding $\mathbf{e}_T$ lies on the unit hypersphere
$\mathbb{S}^{d-1}$. For paired embeddings we therefore work on the product
manifold,
\[
\mathcal{M} = \mathbb{S}^{d-1} \times \mathbb{S}^{d-1},
\qquad
\mathbf{z} = (\mathbf{e}_I,\mathbf{e}_T),
\]
rather than a single sphere formed by concatenation. This matches the
Riemannian flow matching setting on simple manifolds and their product spaces.

\paragraph{Base distribution.}
We take $p_0$ to be the uniform distribution on $\mathcal{M}=\mathbb{S}^{d-1}\!\times\!\mathbb{S}^{d-1}$ (with respect to the product of round volume measures). Concretely, samples $\mathbf{z}_0=(\mathbf{e}_I^0,\mathbf{e}_T^0)\sim p_0$ are obtained by drawing each block from a unit Gaussian in $\mathbb{R}^d$ and renormalising to $\mathbb{S}^{d-1}$, independently across blocks. Under this choice $\log p_0$ is a constant equal to the negative of the log-volume of $\mathcal{M}$, which simplifies the change-of-variables identity used for log-density evaluation in Sec.~\ref{sec:joint_logdensity}.

\paragraph{Tangent spaces and projection.}
The tangent space factorizes as,
\[
T_{\mathbf{z}}\mathcal{M}
=
T_{\mathbf{e}_I}\mathbb{S}^{d-1}
\times
T_{\mathbf{e}_T}\mathbb{S}^{d-1},
\qquad
T_{\mathbf{e}}\mathbb{S}^{d-1}
=
\{\mathbf{v}\in\mathbb{R}^d:\langle \mathbf{v},\mathbf{e}\rangle=0\}.
\]
Accordingly, if the network outputs a raw ambient-space velocity
$(\mathbf{v}_I,\mathbf{v}_T)$, we project it blockwise onto the tangent space,
\[
\Pi_{\mathbf{z}}(\mathbf{v}_I,\mathbf{v}_T)
=
\bigl(
(I-\mathbf{e}_I\mathbf{e}_I^\top)\mathbf{v}_I,\,
(I-\mathbf{e}_T\mathbf{e}_T^\top)\mathbf{v}_T
\bigr).
\]
In exact continuous-time dynamics, a tangent vector field keeps trajectories on
$\mathcal{M}$. In practice, numerical solvers may still use reprojection to
control drift.

\paragraph{Geodesics, exponential maps, and logarithmic maps.}
Under the product metric, geodesics decouple across the two sphere factors. For
each sphere,
\[
d_{\mathbb{S}}(\mathbf{x},\mathbf{y})
=
\arccos\langle \mathbf{x},\mathbf{y}\rangle,
\]
\[
\exp_{\mathbf{x}}(\mathbf{v})
=
\cos(\|\mathbf{v}\|)\,\mathbf{x}
+
\sin(\|\mathbf{v}\|)\,\frac{\mathbf{v}}{\|\mathbf{v}\|},
\qquad
\log_{\mathbf{x}}(\mathbf{y})
=
\frac{\theta}{\sin\theta}\bigl(\mathbf{y}-\cos\theta\,\mathbf{x}\bigr),
\quad
\theta=d_{\mathbb{S}}(\mathbf{x},\mathbf{y}).
\]
Using the geodesic-distance premetric and $\kappa(t)=1-t$, the conditional RFM
path is the constant-speed geodesic,
\[
\mathbf{z}_t
=
\exp_{\mathbf{z}_1}\!\bigl((1-t)\log_{\mathbf{z}_1}(\mathbf{z}_0)\bigr),
\]
where the exponential and logarithmic maps are applied componentwise on the two
sphere factors.

\paragraph{Masked interpolant.} Writing $\mathbf{g}_t$ for the per-block geodesic operator and decomposing $\mathbf{z}=[\mathbf{e}_I;\mathbf{e}_T]$, the masked interpolant used in $\mathcal{L}_{\mathrm{GeoFlowVLM}}$ is,
\[
\mathbf{z}_t \;=\; (\mathbf{1}_{2d}-\mathbf{m})\odot\mathbf{z}_1 \;+\; \mathbf{m}\odot\mathbf{g}_t(\mathbf{z}_0,\mathbf{z}_1),
\qquad
\mathbf{g}_t(\mathbf{a},\mathbf{b}) \;=\; \tfrac{\sin((1-t)\omega)}{\sin\omega}\,\mathbf{a} + \tfrac{\sin(t\omega)}{\sin\omega}\,\mathbf{b},
\]
with $\omega=\arccos\langle\mathbf{a},\mathbf{b}\rangle$. Conditioned dimensions ($\mathbf{m}_k=0$) are pinned at $\mathbf{z}_1$ for all $t$; generated dimensions ($\mathbf{m}_k=1$) follow the geodesic from $\mathbf{z}_0$ to $\mathbf{z}_1$.

\paragraph{Target velocity.} The flow-matching regression target on the generated dimensions is the tangent vector of the geodesic,
\[
\mathbf{v}_t(\mathbf{z}_t\mid\mathbf{z}_1) \;=\; \mathbf{m}\odot\frac{\omega}{\sin\omega}\bigl[\cos(t\omega)\,\mathbf{z}_1-\cos((1-t)\omega)\,\mathbf{z}_0\bigr],
\]
with $\omega$ computed per block. On conditioned dimensions ($\mathbf{m}_k=0$) the target is identically zero, so the mask in $\mathcal{L}_{\mathrm{GeoFlowVLM}}$ removes any supervision there.

\paragraph{OT Coupling.}
Given a data sample $\mathbf{z}_1 = [\mathbf{e}_I^1, \mathbf{e}_T^1] \sim p(\mathbf{z})$
on the product sphere $\mathbb{S}^{d_I-1} \times \mathbb{S}^{d_T-1}$, we draw a source
sample $\mathbf{z}_0 = [\mathbf{e}_I^0, \mathbf{e}_T^0]$ by projecting independent
Gaussian samples onto each sphere factor. To reduce the variance of the flow matching
objective and encourage more coherent geodesic trajectories, we pair source and target
samples within each minibatch via entropic optimal
transport~\citep{cuturi2013sinkhorn, tong2024improving}, using the sum of geodesic
arc distances across both modalities as the transport cost,
\begin{equation}
    c(\mathbf{z}_0, \mathbf{z}_1)
    = d_{\mathbb{S}}(\mathbf{e}_I^0, \mathbf{e}_I^1)
    + d_{\mathbb{S}}(\mathbf{e}_T^0, \mathbf{e}_T^1),
    \qquad
    d_{\mathbb{S}}(\mathbf{a}, \mathbf{b}) = \arccos(\langle \mathbf{a}, \mathbf{b} \rangle).
\end{equation}
Given uniform marginals, an entropically regularized transport plan is computed via
Sinkhorn's algorithm with regularization $\varepsilon = 0.05$, and converted to a
deterministic assignment via row-wise argmax, which approximates a hard coupling.
Because targets are assigned as paired product-space samples $(\mathbf{e}_I^1,
\mathbf{e}_T^1)$, the coupling preserves the empirical image-text pairing structure
during minibatch matching, unlike separate per-modality OT which could assign
inconsistent image and text targets to the same source. We use a large minibatch
size of $B = 8192$ to reduce OT estimation noise; smaller batches produce less
stable couplings and worse training behaviour.

\paragraph{Per-mask time scheduling.} Times are sampled per stream. Under the joint mask $\mathbf{m}_{\mathrm{joint}}$ both streams share a single $t\sim\mathcal{U}[0,1]$. Under $\mathbf{m}_{I\to T}$ the text stream receives $t\sim\mathcal{U}[0,1]$ while the image stream is fixed at $t=0$; under $\mathbf{m}_{T\to I}$ the roles are reversed. The conditional reduction in Theorem~\ref{thm:conditional_reduction} is driven by the mask zeroing the loss on conditioned dimensions and the interpolant pinning those dimensions to $\mathbf{z}_1$, so the population-limit theorem is robust to the time-schedule choice. We adopt the fixed-$t{=}0$ convention on conditioned streams as a deliberate operational choice: it ensures the network sees a clean conditioner, matching how the model is queried at inference, and aligns the training and inference evaluation regimes, simplifying generalisation arguments under finite samples and finite capacity.

\paragraph{Riemannian divergence on embedded submanifolds.}
For $\mathcal{M}\subset\mathbb{R}^{2d}$ embedded as in Sec.~\ref{sec:method}, write $\Pi_{\mathbf{z}}\in\mathbb{R}^{2d\times 2d}$ for the orthogonal projector onto $T_{\mathbf{z}}\mathcal{M}$ (the blockwise projector $\mathrm{diag}(I-\mathbf{e}_I\mathbf{e}_I^\top,\,I-\mathbf{e}_T\mathbf{e}_T^\top)$). Let $\tilde{\mathbf{v}}_t:\mathbb{R}^{2d}\to\mathbb{R}^{2d}$ be any smooth ambient extension of $\mathbf{v}_t$ that agrees with $\mathbf{v}_t$ on $\mathcal{M}$ and takes tangent values there. The Riemannian divergence of $\mathbf{v}_t$ on $\mathcal{M}$ admits the ambient form,
\begin{equation}\label{eq:div-projected}
    \mathrm{div}_{\mathcal{M}}\mathbf{v}_t(\mathbf{z}) \;=\; \mathrm{tr}\!\bigl(\Pi_{\mathbf{z}}\,\nabla\tilde{\mathbf{v}}_t(\mathbf{z})\,\Pi_{\mathbf{z}}\bigr),
\end{equation}
where $\nabla\tilde{\mathbf{v}}_t$ is the ambient Jacobian. This is the standard expression for the divergence of a tangent vector field on an embedded submanifold and will be used in Sec.~\ref{sec:joint_logdensity}.

\subsection{Proof of Theorem~\ref{thm:conditional_reduction}}
\label{sec:proof_conditional}

We provide the full argument; the structure mirrors the conditional flow matching identity of \citet{lipman2023flow}, generalised to manifolds by \citet{chen2024flow}, with the only addition being that the conditioning input is supplied as a parallel un-flowing stream rather than as an external label, plus the mild bookkeeping required to handle the joint mask $\mathbf{m}_{\mathrm{joint}}$ alongside the conditional masks.

\paragraph{Setup.} Let $\mathbf{z}=(\mathbf{e}_I,\mathbf{e}_T)\in\mathcal{M}=\mathbb{S}^{d-1}\times\mathbb{S}^{d-1}$ with joint data distribution $p$ satisfying assumption (A1). Write $\mathbf{m}_{I\to T}=[\mathbf{0}_d;\mathbf{1}_d]$ for the image-to-text mask and let,
\[
\mathbf{z}_t \;=\; (\mathbf{e}_I,\,\mathbf{e}_T^{t}),
\qquad
\mathbf{e}_T^{t} \;=\; \mathbf{g}_t(\mathbf{e}_T^{0},\mathbf{e}_T),
\qquad
\mathbf{e}_T^{0}\sim p_0,\;\;(\mathbf{e}_I,\mathbf{e}_T)\sim p,
\]
denote the masked interpolant; only the second-block component is transported. Let $v_t(\mathbf{e}_T^{t}\mid\mathbf{e}_T)$ be the closed-form conditional geodesic velocity on $\mathbb{S}^{d-1}$ defined in Sec.~\ref{sec:supp:product-geometry}.

\paragraph{Loss decomposition.} Under (A2) and the assumption that the mask distribution $p(\mathbf{m})$ assigns positive probability to all three masks, the loss $\mathcal{L}_{\mathrm{GeoFlowVLM}}(\phi)$ admits the additive decomposition,
\[
\mathcal{L}_{\mathrm{GeoFlowVLM}}(\phi) \;=\; p(\mathbf{m}_{\mathrm{joint}})\,\mathcal{L}_{\mathrm{joint}}(\phi) \;+\; p(\mathbf{m}_{I\to T})\,\mathcal{L}_{I\to T}(\phi) \;+\; p(\mathbf{m}_{T\to I})\,\mathcal{L}_{T\to I}(\phi),
\]
where each of the three sub-objectives depends on the network only through its restriction to the corresponding mask. Restricted to mask $\mathbf{m}_{I\to T}$ the loss is,
\begin{equation}\label{eq:loss_i2t}
    \mathcal{L}_{I\to T}(\phi)
    \;=\;
    \mathbb{E}_{t,(\mathbf{e}_I,\mathbf{e}_T)\sim p,\mathbf{e}_T^{0}\sim p_0}
    \left[
    \bigl\| [\mathbf{v}_t^{\phi}(\mathbf{z}_t,\mathbf{m}_{I\to T})]_T - v_t(\mathbf{e}_T^{t}\mid\mathbf{e}_T)\bigr\|_{g_{\mathbf{e}_T^{t}}}^2
    \right],
\end{equation}
where $[\,\cdot\,]_T$ denotes the second-block (text) component. The first-block contribution vanishes directly because the mask $\mathbf{m}_{I\to T}=[\mathbf{0}_d;\mathbf{1}_d]$ multiplies the image block by zero; consistently, the geodesic-tangent target on the image block is also zero, since the conditioned stream is pinned to the data point $\mathbf{e}_I$ throughout (no transport).

\paragraph{Pointwise minimiser.} Conditioning on $(\mathbf{e}_I=\mathbf{e}_I^{*},\mathbf{e}_T^{t}=\mathbf{x})$ inside the expectation in~\eqref{eq:loss_i2t} reduces the inner squared error to a quadratic in the network output, whose unique minimiser is the conditional Bayes regressor,
\begin{equation}\label{eq:bayes_regressor}
    [\mathbf{v}_t^{\phi}]_T^{\star}\bigl((\mathbf{e}_I^{*},\mathbf{x}),\mathbf{m}_{I\to T}\bigr)
    \;=\;
    \mathbb{E}\!\left[\,v_t(\mathbf{e}_T^{t}\mid\mathbf{e}_T)\;\bigm|\;\mathbf{e}_I=\mathbf{e}_I^{*},\,\mathbf{e}_T^{t}=\mathbf{x}\right].
\end{equation}
Because the conditioning $\mathbf{e}_I^{*}$ enters the expectation only through the joint distribution of $(\mathbf{e}_I,\mathbf{e}_T)$, the inner expectation can be rewritten as,
\begin{equation}\label{eq:marginal_velocity}
    \mathbb{E}_{(\mathbf{e}_T,\mathbf{e}_T^{t})\sim p_t(\,\cdot\,\mid\mathbf{e}_I^{*})}\!\left[v_t(\mathbf{x}\mid\mathbf{e}_T)\bigm|\mathbf{e}_T^{t}=\mathbf{x}\right],
\end{equation}
where $p_t(\,\cdot\,\mid\mathbf{e}_I^{*})$ denotes the joint distribution over $(\mathbf{e}_T,\mathbf{e}_T^{t})$ induced by $\mathbf{e}_T\sim p(\,\cdot\,\mid\mathbf{e}_I^{*})$ and $\mathbf{e}_T^{t}=\mathbf{g}_t(\mathbf{e}_T^{0},\mathbf{e}_T)$ with $\mathbf{e}_T^{0}\sim p_0$.

\paragraph{Identifying the marginal velocity.} Expression~\eqref{eq:marginal_velocity} is exactly the marginal velocity field associated with the conditional probability path that interpolates from $p_0$ on $\mathbb{S}^{d-1}$ to $p(\,\cdot\,\mid\mathbf{e}_I^{*})$ via the geodesic mixture~\citep[Theorem~3]{lipman2023flow}\citep[\S 4]{chen2024flow}. By the manifold continuity equation, integrating the ODE $\dot{\mathbf{x}}_t=[\mathbf{v}_t^{\phi}]_T^{\star}((\mathbf{e}_I^{*},\mathbf{x}_t),\mathbf{m}_{I\to T})$ from $t=0$ to $t=1$ with $\mathbf{x}_0\sim p_0$ produces samples whose distribution at $t=1$ equals $p(\mathbf{e}_T\mid\mathbf{e}_I^{*})$, almost everywhere in $(t,\mathbf{x})$ as required by claim 2. The symmetric statement for $\mathbf{m}_{T\to I}$ follows by exchanging the roles of $\mathbf{e}_I$ and $\mathbf{e}_T$. Claim 1 (joint mask) follows from the same conditional CFM identity applied without conditioner: $\mathcal{L}_{\mathrm{joint}}$ restricted to the joint mask is the canonical Riemannian CFM objective on $\mathcal{M}$ with target distribution $p$, whose unique pointwise minimiser by~\citep{chen2024flow} is the marginal velocity transporting $p_0$ to $p$. \hfill$\square$

\paragraph{On the simultaneous attainability of all three minimisers by a single network.}
A reader might worry that since one network must minimise three objectives that share parameters, the population minima of the three sub-objectives might not be jointly attainable. They are, in the population limit, because each pointwise minimiser is determined by a disjoint conditioning event: the masked CFM target depends on $(\mathbf{m},\mathbf{z}_t,t)$, and the three masks $\mathbf{m}_{\mathrm{joint}},\mathbf{m}_{I\to T},\mathbf{m}_{T\to I}$ partition the input space into disjoint regions (the network observes the mask explicitly as input). Therefore the three pointwise minimisers do not compete at any single $(\mathbf{m},\mathbf{z}_t,t)$. Realising the population minimum requires only that the function class is large enough to represent the union of three target functions on disjoint domains, which is implied by (A3).

\paragraph{Remark on finite samples and finite capacity.} The theorem is a population-limit identity assuming a sufficiently expressive function class so that~\eqref{eq:bayes_regressor} is attained. In practice, the trained network only approximates~\eqref{eq:bayes_regressor}, and the gap between the empirical conditional flow and the true conditional distribution is controlled by the generalisation gap of $\mathcal{L}_{I\to T}$. We do not provide a quantitative bound; consistency arguments for CFM under finite capacity and finite samples remain an active area of theoretical work and are outside the scope of this paper.

\subsection{Retrieval Entropy: Computational Details}
\label{sec:supp:retrieval-entropy}

We give the full computational pipeline for the aleatoric score $H$ defined in Sec.~\ref{sec:uq:aleatoric}. We describe the text-to-image direction; the image-to-text case is symmetric.

\paragraph{Posterior sampling.} Given a text query $\mathbf{e}_T$, we sample $M$ image embeddings from the conditional flow by integrating the Riemannian ODE under mask $\mathbf{m}_{T\to I}$,
\[
\mathbf{e}_I^{(0,m)} \sim p_0,\qquad
\frac{d\mathbf{e}_I^{(t,m)}}{dt} = \bigl[\mathbf{v}_t^{\phi}\!\bigl((\mathbf{e}_I^{(t,m)},\mathbf{e}_T),\mathbf{m}_{T\to I}\bigr)\bigr]_I,
\]
projecting onto $\mathbb{S}^{d-1}$ after each step to prevent manifold drift. We collect the endpoint samples $\hat{\mathbf{e}}_I^{(m)} := \mathbf{e}_I^{(1,m)}$ for $m=1,\dots,M$. By Theorem~\ref{thm:conditional_reduction} these are samples from $p_\phi(\mathbf{e}_I\mid\mathbf{e}_T)$ in the population limit.

\paragraph{Gallery aggregation.} For each gallery item $\mathbf{g}_j\in\mathbb{S}^{d-1}$, $j=1,\dots,N$, we aggregate the $M$ posterior samples into a single similarity score via a vMF-style log-sum-exp,
\[
s_j \;=\; \log\frac{1}{M}\sum_{m=1}^{M} \exp\!\bigl(\kappa\,\mathbf{g}_j^{\top}\hat{\mathbf{e}}_I^{(m)}\bigr).
\]
This is the log-density of an unnormalised vMF kernel-density estimate 
of $p_\phi(\mathbf{e}_I\mid\mathbf{e}_T)$ at $\mathbf{g}_j$, with 
concentration parameter $\kappa>0$ controlling kernel sharpness: 
at $\kappa\to 0$, $s_j$ becomes constant in $j$ and $H$ saturates 
at $\log N$; larger $\kappa$ sharpens discrimination between gallery items.


\paragraph{Softmax normalisation and entropy.} Normalising across the gallery yields a posterior $\boldsymbol{\pi}=(\pi_1,\dots,\pi_N)$ with $\pi_j = e^{s_j}/\sum_k e^{s_k}$, whose Shannon entropy,
\[
H \;=\; -\sum_{j=1}^{N} \pi_j \log\pi_j
\]
is reported as the aleatoric score for the query.

\paragraph{Hyperparameters and their roles.} Three quantities govern the score: the number of samples $M$, the kernel concentration $\kappa$, and the gallery size $N$. We use $M=50$ samples per query and $\kappa=100$ throughout (Table~\ref{tab:supp_hyperparams}). $N$ is fixed by the benchmark and only affects $H$ through the upper bound $H\le\log N$; for a fair comparison across datasets we report Recall@1-vs-entropy calibration on a per-dataset basis.

\subsection{Joint Log-Density Evaluation on the Product Hypersphere}
\label{sec:joint_logdensity}

Let $\mathbf{e}_I \in \mathbb{S}^{d-1}$ and $\mathbf{e}_T \in \mathbb{S}^{d-1}$
denote the $\ell_2$-normalised CLIP image and text embeddings, and define the
joint state $\mathbf{z} = (\mathbf{e}_I,\mathbf{e}_T) \in \mathcal{M}$ on the
product manifold $\mathcal{M} = \mathbb{S}^{d-1} \times \mathbb{S}^{d-1}$.
The learned velocity field
$\mathbf{v}_t : \mathcal{M} \rightarrow T\mathcal{M}$ transports the base
distribution $p_0$ at $t=0$ (uniform on $\mathcal{M}$, see Sec.~\ref{sec:supp:product-geometry}) to a target distribution $p_1$ at $t=1$ through the
manifold ODE~\citep{chen2024flow},
\begin{equation}
    \frac{d}{dt}\,\mathbf{z}_t = \mathbf{v}_t(\mathbf{z}_t),
    \qquad t \in [0,1].
    \label{eq:mf_ode}
\end{equation}
For any trajectory $\mathbf{z}_t$ solving~\eqref{eq:mf_ode}, the Riemannian
instantaneous change-of-variables theorem gives,
\begin{equation}
    \log p_1(\mathbf{z}_1)
    =
    \log p_0(\mathbf{z}_0)
    -
    \int_0^1
    \operatorname{div}_{\mathcal{M}}
    \mathbf{v}_t(\mathbf{z}_t)\,dt,
    \label{eq:cov}
\end{equation}
where $\mathbf{z}_0$ is the base point that flows to $\mathbf{z}_1$ under~\eqref{eq:mf_ode}. Thus the log-density at the terminal point is determined by two terms: the base contribution $\log p_0(\mathbf{z}_0)$ and the accumulated divergence of the learned flow along the path from $t=0$ to $t=1$. To recover both quantities simultaneously we integrate a reverse-time augmented system. Specifically, we introduce a scalar accumulator $s_t$, initialised to zero at $t=1$, and evolve,
\begin{equation}
    \frac{d}{dt}
    \begin{pmatrix}
        \mathbf{z}_t \\ s_t
    \end{pmatrix}
    =
    \begin{pmatrix}
        \mathbf{v}_t(\mathbf{z}_t) \\
        \operatorname{div}_{\mathcal{M}}
        \mathbf{v}_t(\mathbf{z}_t)
    \end{pmatrix},
    \qquad t:1\rightarrow 0,
    \label{eq:aug_ode}
\end{equation}
with terminal conditions $\mathbf{z}_{t=1}=\mathbf{z}_1$ and $s_{t=1}=0$. Integrating~\eqref{eq:aug_ode} backward to $t=0$ yields,
\begin{equation}
    s_0
    =
    \int_1^0
    \operatorname{div}_{\mathcal{M}}
    \mathbf{v}_t(\mathbf{z}_t)\,dt
    =
    -\int_0^1
    \operatorname{div}_{\mathcal{M}}
    \mathbf{v}_t(\mathbf{z}_t)\,dt,
\end{equation}
so that~\eqref{eq:cov} becomes,
\begin{equation}
    \log p_1(\mathbf{z}_1)
    =
    \log p_0(\mathbf{z}_0) + s_0.
    \label{eq:logp_reverse}
\end{equation}

where $s_0$ is the accumulated divergence integral along the probability flow ODE
trajectory. Since $p_0$ is uniform on $\mathbb{S}^{d-1}$, the base log-density
$\log p_0(\mathbf{z}_0) = C$ is a constant determined solely by the embedding
dimension $d$, shared across all samples within a dataset and modality.



\paragraph{Stochastic divergence estimation: tangent-projected Hutchinson.}
Computing $\operatorname{div}_{\mathcal{M}}\mathbf{v}_t(\mathbf{z})$ exactly would require materialising the full $D \times D$ Jacobian of $\mathbf{v}_t$, where $D = 2d$ is the ambient joint dimension, which is prohibitively expensive at CLIP scale. We therefore estimate the divergence stochastically using Hutchinson trace estimation with tangent-projected probes. Given $K$ ambient probe vectors
$\boldsymbol{\varepsilon}^{(k)} =
(\boldsymbol{\varepsilon}^{(k)}_I,\boldsymbol{\varepsilon}^{(k)}_T)
\in \mathbb{R}^{2d}$ drawn i.i.d.\ from a zero-mean unit-covariance distribution (e.g., Rademacher or standard Gaussian), we project each blockwise onto the tangent space $T_{\mathbf{z}}\mathcal{M}$,
\begin{equation}
    \tilde{\boldsymbol{\varepsilon}}^{(k)}
    =
    \Pi_{\mathbf{z}}\,\boldsymbol{\varepsilon}^{(k)}
    \;=\;
    \Bigl(
    (\mathbf{I} - \mathbf{e}_I \mathbf{e}_I^\top)
    \boldsymbol{\varepsilon}^{(k)}_I,\;
    (\mathbf{I} - \mathbf{e}_T \mathbf{e}_T^\top)
    \boldsymbol{\varepsilon}^{(k)}_T
    \Bigr).
    \label{eq:tangent_probe}
\end{equation}
Using these tangent probes, the divergence is estimated as,
\begin{equation}
    \widehat{\mathrm{div}}^{(K)}_{\mathcal{M}}
    \mathbf{v}_t(\mathbf{z})
    \;=\;
    \frac{1}{K}\sum_{k=1}^{K}
    \Bigl\langle
        \tilde{\boldsymbol{\varepsilon}}^{(k)},
        \nabla_{\mathbf{z}}
        \Bigl\langle
            \tilde{\mathbf{v}}_t(\mathbf{z}),
            \tilde{\boldsymbol{\varepsilon}}^{(k)}
        \Bigr\rangle
    \Bigr\rangle,
    \label{eq:hutchinson}
\end{equation}
where $\tilde{\mathbf{v}}_t$ is any smooth ambient extension of $\mathbf{v}_t$ that takes tangent values on $\mathcal{M}$ (in our implementation, the network's projected output, treated as a function of $\mathbb{R}^{2d}$). Each term requires only a single vector--Jacobian product and is therefore implementable with one reverse-mode autodiff pass, at the same asymptotic cost as a standard gradient computation.

\begin{lemma}[Unbiasedness of the tangent-projected Hutchinson estimator]\label{lem:hutchinson}
Let $\Pi_{\mathbf{z}}$ be the orthogonal projector onto $T_{\mathbf{z}}\mathcal{M}$, and let $\boldsymbol{\varepsilon}\in\mathbb{R}^{2d}$ be a random vector with $\mathbb{E}[\boldsymbol{\varepsilon}]=\mathbf{0}$ and $\mathbb{E}[\boldsymbol{\varepsilon}\boldsymbol{\varepsilon}^\top]=\mathbf{I}_{2d}$. Set $\tilde{\boldsymbol{\varepsilon}}=\Pi_{\mathbf{z}}\boldsymbol{\varepsilon}$. Then for any smooth ambient extension $\tilde{\mathbf{v}}_t$ of $\mathbf{v}_t$ that takes tangent values on $\mathcal{M}$,
\begin{equation}\label{eq:hutchinson-unbiased}
    \mathbb{E}_{\boldsymbol{\varepsilon}}\!\left[\tilde{\boldsymbol{\varepsilon}}^\top\nabla\tilde{\mathbf{v}}_t(\mathbf{z})\,\tilde{\boldsymbol{\varepsilon}}\right] \;=\; \mathrm{tr}\!\bigl(\Pi_{\mathbf{z}}\nabla\tilde{\mathbf{v}}_t(\mathbf{z})\Pi_{\mathbf{z}}\bigr) \;=\; \mathrm{div}_{\mathcal{M}}\mathbf{v}_t(\mathbf{z}).
\end{equation}
Consequently the estimator~\eqref{eq:hutchinson} is unbiased: $\mathbb{E}[\widehat{\mathrm{div}}^{(K)}_{\mathcal{M}}\mathbf{v}_t(\mathbf{z})]=\mathrm{div}_{\mathcal{M}}\mathbf{v}_t(\mathbf{z})$ for every $K\ge 1$.
\end{lemma}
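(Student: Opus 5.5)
The plan is to prove the two equalities in \eqref{eq:hutchinson-unbiased} separately. The first is a pure linear-algebra identity at a fixed point $\mathbf{z}$ and time $t$. The second is exactly the ambient-form divergence identity \eqref{eq:div-projected}, which we take as given from Sec.~\ref{sec:supp:product-geometry}. Throughout, write $J:=\nabla\tilde{\mathbf{v}}_t(\mathbf{z})\in\mathbb{R}^{2d\times 2d}$; this is a deterministic matrix once $\mathbf{z}$ and $t$ are fixed.

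For the first equality, we substitute $\tilde{\boldsymbol{\varepsilon}}=\Pi_{\mathbf{z}}\boldsymbol{\varepsilon}$ to get $\tilde{\boldsymbol{\varepsilon}}^\top J\tilde{\boldsymbol{\varepsilon}}=\boldsymbol{\varepsilon}^\top\Pi_{\mathbf{z}}^\top J\Pi_{\mathbf{z}}\boldsymbol{\varepsilon}$. Since $\Pi_{\mathbf{z}}=\mathrm{diag}(I-\mathbf{e}_I\mathbf{e}_I^\top,\,I-\mathbf{e}_T\mathbf{e}_T^\top)$ is symmetric, $\Pi_{\mathbf{z}}^\top=\Pi_{\mathbf{z}}$. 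A scalar equals its own trace, so we apply cyclicity of the trace:
\[
\boldsymbol{\varepsilon}^\top\Pi_{\mathbf{z}} J\Pi_{\mathbf{z}}\boldsymbol{\varepsilon}=\mathrm{tr}\bigl(\Pi_{\mathbf{z}} J\Pi_{\mathbf{z}}\,\boldsymbol{\varepsilon}\boldsymbol{\varepsilon}^\top\bigr).
\]
Expectation is linear and commutes with the trace, and $\mathbb{E}[\boldsymbol{\varepsilon}\boldsymbol{\varepsilon}^\top]=\mathbf{I}_{2d}$, so the expectation equals $\mathrm{tr}(\Pi_{\mathbf{z}}J\Pi_{\mathbf{z}})$. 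Only the second-moment assumption is used here; the zero-mean assumption is not needed. The expectation is finite because the quadratic form is bounded by $\|J\|\,\|\boldsymbol{\varepsilon}\|^2$, which is integrable.

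For the second equality, we invoke \eqref{eq:div-projected} directly. If one wants to justify it, the argument is short. Choose an orthonormal basis $\{\mathbf{u}_i\}$ of $T_{\mathbf{z}}\mathcal{M}$; then $\mathrm{tr}(\Pi_{\mathbf{z}}J\Pi_{\mathbf{z}})=\sum_i\mathbf{u}_i^\top J\mathbf{u}_i$. On the other side, $\mathrm{div}_{\mathcal{M}}\mathbf{v}=\sum_i\langle\nabla_{\mathbf{u}_i}\mathbf{v},\mathbf{u}_i\rangle$, where the Levi-Civita connection of the embedded product of round spheres is the tangential projection of the ambient directional derivative. That derivative is $J\mathbf{u}_i$, and it depends only on the values of $\tilde{\mathbf{v}}_t$ along $\mathcal{M}$, since $\mathbf{u}_i$ is tangent. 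This also shows the right-hand side is independent of the choice of extension.

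Finally, we connect this to the estimator \eqref{eq:hutchinson}. Each summand there is $\langle\tilde{\boldsymbol{\varepsilon}}^{(k)},\nabla_{\mathbf{z}}\langle\tilde{\mathbf{v}}_t,\tilde{\boldsymbol{\varepsilon}}^{(k)}\rangle\rangle$, with the probe held fixed during differentiation. This equals $\tilde{\boldsymbol{\varepsilon}}^{(k)\top}J^\top\tilde{\boldsymbol{\varepsilon}}^{(k)}=\tilde{\boldsymbol{\varepsilon}}^{(k)\top}J\tilde{\boldsymbol{\varepsilon}}^{(k)}$, because a quadratic form takes the same value under $J$ and $J^\top$. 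Linearity of expectation over the i.i.d.\ probes then gives unbiasedness for every $K\ge1$.

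The main subtle step is the second equality. It requires being careful that the ambient Jacobian contracted against tangent vectors yields the intrinsic covariant divergence, and that the result does not depend on the extension. Since the paper states \eqref{eq:div-projected} as standard, I would cite it and add the brief basis argument above rather than a full derivation.
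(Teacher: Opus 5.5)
Your proof is correct and takes the same approach as the paper. The paper applies the standard Hutchinson identity $\mathbb{E}[\boldsymbol{\varepsilon}^\top A\boldsymbol{\varepsilon}]=\mathrm{tr}(A)$ to $A=\Pi_{\mathbf{z}}\nabla\tilde{\mathbf{v}}_t(\mathbf{z})\Pi_{\mathbf{z}}$, uses symmetry of the projector to rewrite the quadratic form in $\tilde{\boldsymbol{\varepsilon}}$, and cites \eqref{eq:div-projected}; you follow that route with the Hutchinson identity derived inline, and your additional remarks (the orthonormal-basis justification of \eqref{eq:div-projected}, extension-independence, and matching the vector--Jacobian-product form of the estimator to the quadratic form via $J$ versus $J^\top$) are correct details the paper leaves implicit.
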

\noindent\emph{Proof.} For any matrix $A\in\mathbb{R}^{2d\times 2d}$ and any random vector $\boldsymbol{\varepsilon}$ with the stated moments, the standard Hutchinson identity gives $\mathbb{E}[\boldsymbol{\varepsilon}^\top A\boldsymbol{\varepsilon}]=\mathrm{tr}(A)$. Apply this to $A=\Pi_{\mathbf{z}}\nabla\tilde{\mathbf{v}}_t(\mathbf{z})\Pi_{\mathbf{z}}$ to obtain,
\[
\mathbb{E}\!\left[\boldsymbol{\varepsilon}^\top \Pi_{\mathbf{z}}\nabla\tilde{\mathbf{v}}_t(\mathbf{z})\Pi_{\mathbf{z}}\,\boldsymbol{\varepsilon}\right] = \mathrm{tr}\!\bigl(\Pi_{\mathbf{z}}\nabla\tilde{\mathbf{v}}_t(\mathbf{z})\Pi_{\mathbf{z}}\bigr).
\]
Using $\Pi_{\mathbf{z}}^\top=\Pi_{\mathbf{z}}=\Pi_{\mathbf{z}}^2$ (since $\Pi_{\mathbf{z}}$ is an orthogonal projector), the left-hand side equals $\mathbb{E}[(\Pi_{\mathbf{z}}\boldsymbol{\varepsilon})^\top\nabla\tilde{\mathbf{v}}_t(\mathbf{z})(\Pi_{\mathbf{z}}\boldsymbol{\varepsilon})]=\mathbb{E}[\tilde{\boldsymbol{\varepsilon}}^\top\nabla\tilde{\mathbf{v}}_t(\mathbf{z})\tilde{\boldsymbol{\varepsilon}}]$. The right-hand side equals $\mathrm{div}_{\mathcal{M}}\mathbf{v}_t(\mathbf{z})$ by~\eqref{eq:div-projected}. \hfill$\square$

\paragraph{Variance of the estimator.} The estimator's variance for a single probe is bounded by $\|\Pi_{\mathbf{z}}\nabla\tilde{\mathbf{v}}_t(\mathbf{z})\Pi_{\mathbf{z}}\|_F^2$ for Rademacher probes (and twice that for Gaussian probes), which is finite for our smooth network outputs and decreases as $1/K$ with the number of probes; see~\citep[Sec.~A]{tong2024improving}.
In our experiments we use $K=1$ probe with Rademacher entries, trading a small per-step variance for a $K\times$ reduction in compute; the resulting score $s_0$ has non-trivial single-probe variance that contributes to the noise floor in our ranking-based metrics.

\paragraph{Pipeline.}
Putting everything together, the joint NLL of an image--text pair $(\mathbf{e}_I,\mathbf{e}_T)$ is computed by: (i) setting $\mathbf{z}_1 = (\mathbf{e}_I,\mathbf{e}_T)$; (ii) integrating the augmented reverse-time ODE~\eqref{eq:aug_ode} from $t=1$ to $t=0$ using the learned velocity field $\mathbf{v}_t$ under the joint mask $\mathbf{m}_{\mathrm{joint}}$, with divergence estimated via~\eqref{eq:hutchinson}; (iii) reading off $s_0$; and (iv) reporting $u_{\mathrm{ep}}^{\mathrm{joint}}=-(C+s_0)$, equivalently $-s_0$ up to a dataset-level constant. Conditional log-densities are computed by the same procedure with $\mathbf{m}\in\{\mathbf{m}_{I\to T},\mathbf{m}_{T\to I}\}$; per-modality marginals follow from the Bayes identity~\eqref{eq:marginal-bayes}. In our experiments we solve~\eqref{eq:aug_ode} with a fixed-step Euler solver using 50 integration steps and a single Hutchinson probe ($K=1$).

\paragraph{Marginal estimation in zero-shot classification.} For zero-shot classification we evaluate $-\log p_\phi(\mathbf{e}_I)$ via the Bayes identity by pairing the test image embedding $\mathbf{e}_I$ with the predicted class prompt $\mathbf{e}_T^{(\hat y)}$, and conversely for $-\log p_\phi(\mathbf{e}_T^{(\hat y)})$. In the population limit the identity holds for any $\mathbf{e}_T$ and the choice is immaterial; under finite-step Euler integration with a single Hutchinson probe, however, the joint and conditional path integrals only cancel in expectation, so different choices of $\mathbf{e}_T$ yield numerically different marginal estimates and couple the marginal estimate to the prediction. Our ablations on $N$ and $n_{\mathrm{probe}}$ (Sec.~\ref{sec:supp_nll_steps}, Sec.~\ref{sec:supp_nll_probes}) show that the resulting AUSAC is stable to the precision of the path integrals at the values we use. The coupling of $-\log p_\phi(\mathbf{e}_I)$ to the prediction is therefore a numerical artifact of finite-step integration; the coupling of $-\log p_\phi(\mathbf{e}_T^{(\hat{y})})$ to the prediction is structural but intentional, reflecting that distinct class prompts occupy regions of the text manifold with different training-distribution support.
\section{Additional Results and Ablations}\label{sec:supp_ablations}

\subsection{Riemannian vs.\ Euclidean Flow Matching}\label{sec:supp_geometry}

We train an otherwise identical model using Euclidean flow matching \cite{lipman2023flow} to
isolate the effect of the hyperspherical geometry. We evaluate on both
uncertainty tasks.

\paragraph{Aleatoric uncertainty.}
Table~\ref{tab:riem_vs_euc} reports Spearman $\rho$ across three retrieval
datasets. The Euclidean model fails categorically: $\rho$ is near zero or
positive across almost all datasets and directions, indicating that its
posterior entropy carries no meaningful signal about retrieval difficulty.
Figure~\ref{fig:riem_vs_euc_supp_aleatoric} shows the per-bin calibration
curves; GeoFlowVLM produces a consistent monotone decrease across all six
panels while the Euclidean baseline shows no trend. Respecting the
hyperspherical support of CLIP embeddings is therefore not a minor
implementation detail but a necessary condition for calibrated aleatoric
uncertainty estimation.

\begin{table}[h]
\centering
\caption{
    \textbf{Riemannian vs.\ Euclidean: aleatoric calibration.}
    Spearman $\rho$ ($\downarrow$). \textbf{Bold}: best per column.
}
\label{tab:riem_vs_euc}
\small
\setlength{\tabcolsep}{5pt}
\begin{tabular}{lcccccc}
\toprule
& \multicolumn{2}{c}{\textbf{MS-COCO}}
& \multicolumn{2}{c}{\textbf{Flickr}}
& \multicolumn{2}{c}{\textbf{CUB}} \\
\cmidrule(lr){2-3}\cmidrule(lr){4-5}\cmidrule(lr){6-7}
\textbf{Method} & I2T & T2I & I2T & T2I & I2T & T2I \\
\midrule
Euclidean FM
    & $-0.292$ & $-0.358$
    & $-0.043$ & $+0.146$
    & $-0.315$ & $+0.396$ \\
Riemannian FM (ours)
    & $\mathbf{-1.000}$ & $\mathbf{-1.000}$
    & $\mathbf{-0.988}$ & $\mathbf{-1.000}$
    & $\mathbf{-0.976}$ & $\mathbf{-0.884}$ \\
\bottomrule
\end{tabular}
\end{table}

\begin{figure}[h]
    \centering
    \includegraphics[width=0.75\linewidth]{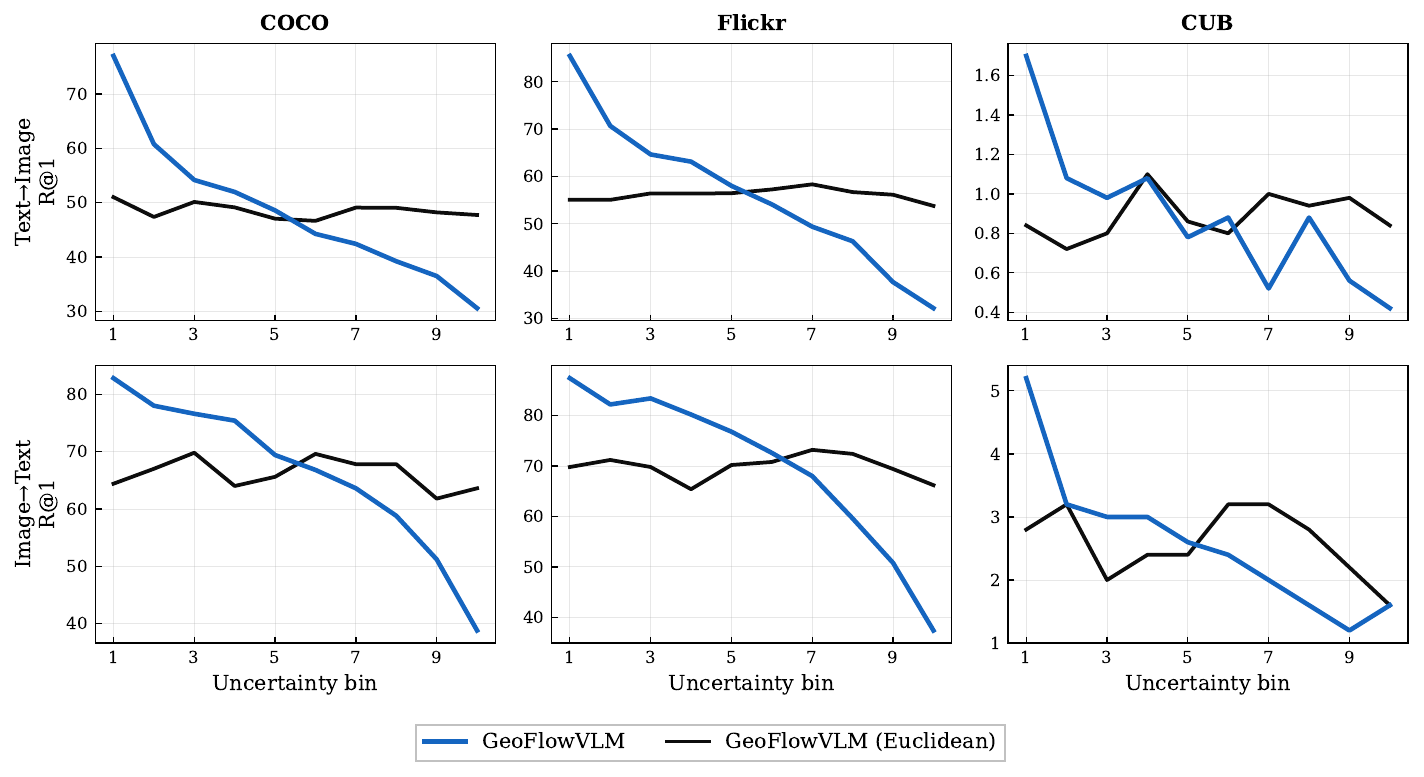}
    \caption{
        \textbf{Riemannian vs.\ Euclidean: aleatoric calibration.}
        Recall@1 per entropy bin for T2I (top) and I2T (bottom) across
        MS-COCO, Flickr, and CUB datasets. GeoFlowVLM produces a consistent
        monotone decrease; the Euclidean baseline shows no calibration signal.
    }
    \label{fig:riem_vs_euc_supp_aleatoric}
\end{figure}

\paragraph{Epistemic uncertainty.}
Figure~\ref{fig:riem_vs_euc_supp_epistemic} shows selective prediction
accuracy across four classification datasets. GeoFlowVLM consistently
outperforms the Euclidean baseline at all rejection fractions, with the
gap widening substantially at high rejection rates and most prominently
on ObjectNet.

\begin{figure}[h]
    \centering
    \includegraphics[width=\linewidth]{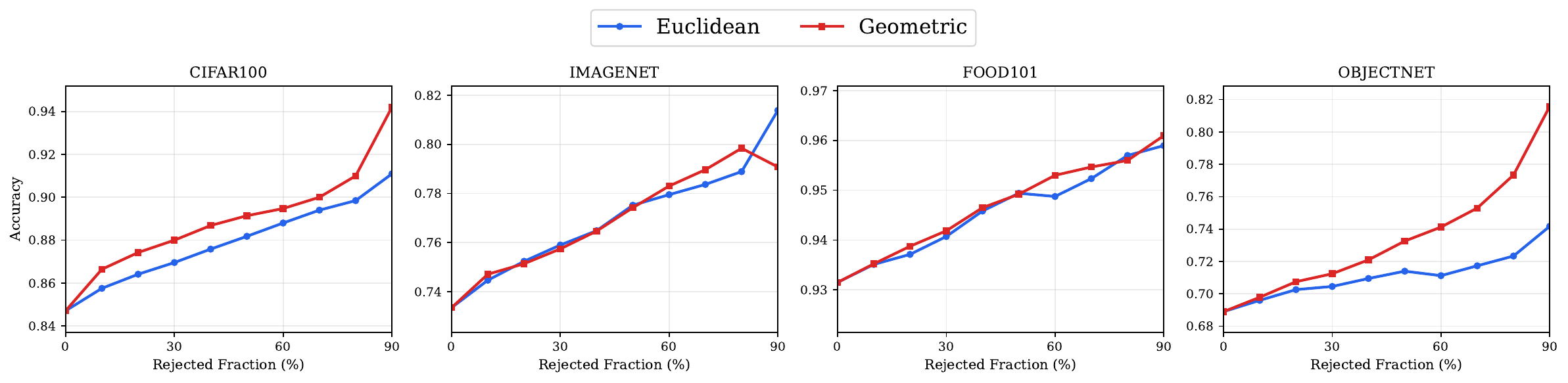}
    \caption{
        \textbf{Riemannian vs.\ Euclidean: epistemic uncertainty.}
        Selective prediction accuracy across four zero-shot datasets.
        GeoFlowVLM consistently outperforms the Euclidean baseline.
    }
    \label{fig:riem_vs_euc_supp_epistemic}
\end{figure}

\subsection{Effect of ODE Steps on Epistemic Uncertainty}\label{sec:supp_nll_steps}
\begin{figure}[h]
    \centering
    \includegraphics[width=\linewidth]{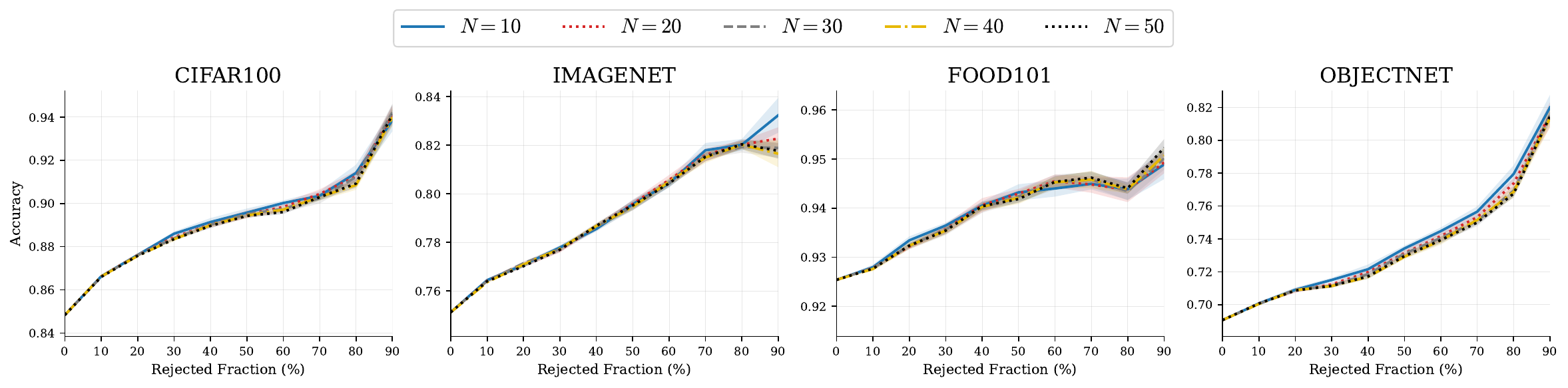}
    \caption{
        \textbf{Effect of ODE steps $N$ on epistemic uncertainty.}
        Selective prediction curves (mean $\pm$ std over 10 runs) across four
        zero-shot datasets. All values of $N \in \{10, 20, 30, 40, 50\}$ produce
        nearly identical curves, confirming that $N=10$ is sufficient.
    }
    \label{fig:nll_steps}
\end{figure}
We study the sensitivity of the epistemic uncertainty signal to the number of
probability flow ODE integration steps $N$ used during NLL evaluation. We vary
$N \in \{10, 20, 30, 40, 50\}$ on a fixed subset of 5{,}000 samples per dataset,
repeating each configuration 10 times with fixed Hutchinson probe. Results are reported as mean $\pm$ std AUSAC in
Table~\ref{tab:nll_steps}.

\begin{table}[h]
\centering
\caption{
    \textbf{Effect of ODE steps $N$ on epistemic uncertainty quality.}
    AUSAC (mean $\pm$ std $\times 10^{-3}$, over 10 runs) across four
    zero-shot datasets.
}
\label{tab:nll_steps}
\small
\setlength{\tabcolsep}{6pt}
\begin{tabular}{lcccccc}
\toprule
& \multicolumn{5}{c}{\textbf{ODE Steps} $N$} \\
\cmidrule(lr){2-6}
\textbf{Dataset} & $N=10$ & $N=20$ & $N=30$ & $N=40$ & $N=50$ \\
\midrule
CIFAR-100
    & $0.8027 \pm 0.5$
    & $0.8022 \pm 0.6$
    & $0.8019 \pm 0.6$
    & $0.8013 \pm 0.4$
    & $0.8012 \pm 0.4$ \\
Food-101
    & $0.8452 \pm 0.5$
    & $0.8451 \pm 0.3$
    & $0.8451 \pm 0.4$
    & $0.8451 \pm 0.3$
    & $0.8452 \pm 0.4$ \\
ImageNet-1K
    & $0.7128 \pm 0.4$
    & $0.7123 \pm 0.6$
    & $0.7118 \pm 0.4$
    & $0.7117 \pm 0.5$
    & $0.7117 \pm 0.4$ \\
ObjectNet
    & $0.6617 \pm 0.8$
    & $0.6595 \pm 0.9$
    & $0.6584 \pm 0.7$
    & $0.6578 \pm 0.6$
    & $0.6577 \pm 0.5$ \\
\bottomrule
\end{tabular}
\end{table}

AUSAC is stable across all values of $N$: variation across step counts is
within run-to-run variance in every case. This confirms that the probability
flow ODE produces reliable density estimates with as few as $N=10$ steps,
enabled by the smooth velocity field learned on the product hypersphere.
We use $N=50$ as a conservative default throughout all experiments.

\subsection{Effect of Hutchinson Probe Count on Epistemic Uncertainty Quality.}\label{sec:supp_nll_probes}
We study the sensitivity of the epistemic uncertainty signal to the number of
Hutchinson probes $n_{\mathrm{probe}}$ used to estimate the divergence integral
during NLL evaluation. We fix $N=10$ ODE steps and vary
$n_{\mathrm{probe}} \in \{1, 2, 3, 4, 5\}$ on a fixed subset of 5{,}000 samples
per dataset, repeating each configuration 10 times to estimate variance. Results
are reported as mean $\pm$ std AUSAC in Table~\ref{tab:nll_probes} and the
corresponding selective prediction curves are shown in
Figure~\ref{fig:nll_probes}.

\begin{figure}[!t]
    \centering
    \includegraphics[width=\linewidth]{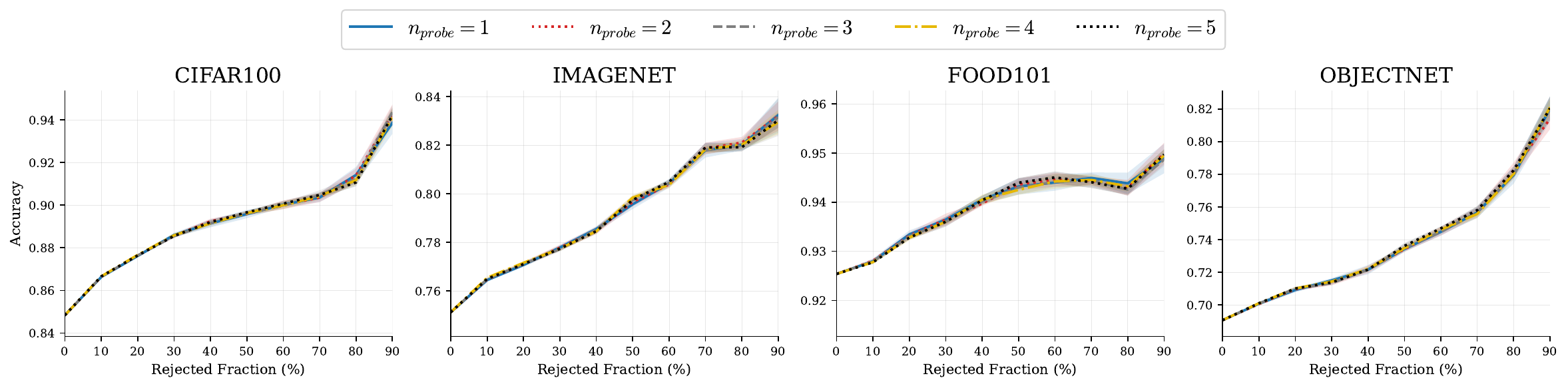}
    \caption{
        \textbf{Effect of Hutchinson probe count $n_{\mathrm{probe}}$ on epistemic
        uncertainty.} Selective prediction curves (mean $\pm$ std over 10 runs)
        at fixed $T=10$. All probe counts produce nearly identical curves,
        confirming that $n_{\mathrm{probe}}=1$ is sufficient.
    }
    \label{fig:nll_probes}
\end{figure}

\begin{table}[!t]
\centering
\caption{
    \textbf{Effect of Hutchinson probe count on epistemic uncertainty quality.}
    AUSAC (mean $\pm$ std, $\times 10^{-3}$, over 10 runs) across four zero-shot
    datasets at fixed $T=10$. Variation across probe counts is within
    run-to-run variance in all cases.
}
\label{tab:nll_probes}
\small
\setlength{\tabcolsep}{5pt}
\begin{tabular}{lccccc}
\toprule
& \multicolumn{5}{c}{$n_{\mathrm{probe}}$} \\
\cmidrule(lr){2-6}
\textbf{Dataset} & $1$ & $2$ & $3$ & $4$ & $5$ \\
\midrule
CIFAR-100
    & $0.8027 \pm 0.5$
    & $0.8030 \pm 0.7$
    & $0.8028 \pm 0.5$
    & $0.8027 \pm 0.3$
    & $0.8028 \pm 0.4$ \\
Food-101
    & $0.8452 \pm 0.5$
    & $0.8451 \pm 0.4$
    & $0.8448 \pm 0.3$
    & $0.8449 \pm 0.3$
    & $0.8450 \pm 0.3$ \\
ImageNet-1K
    & $0.7128 \pm 0.4$
    & $0.7130 \pm 0.6$
    & $0.7129 \pm 0.6$
    & $0.7130 \pm 0.5$
    & $0.7129 \pm 0.3$ \\
ObjectNet
    & $0.6617 \pm 0.8$
    & $0.6616 \pm 0.6$
    & $0.6617 \pm 0.7$
    & $0.6620 \pm 0.5$
    & $0.6625 \pm 0.5$ \\
\bottomrule
\end{tabular}
\end{table}

AUSAC is stable across all probe counts: the variation across $n_{\mathrm{probe}}$
is within run-to-run variance due to Hutchinson randomness in every case, and no
meaningful trend is visible in either the table or the selective prediction curves.
This confirms that a single Hutchinson probe ($n_{\mathrm{probe}}=1$) is sufficient
for reliable epistemic uncertainty estimation, which we attribute to the
well-regularized and geometrically structured CLIP embedding space producing smooth
velocity fields with low divergence variance. We use $n_{\mathrm{probe}}=1$
throughout all experiments.

\subsection{Effect of Guidance Scale \(\lambda\) on Aleatoric UQ}
\label{sec:supp_cfg}

\begin{figure}[!t]
    \centering
    \includegraphics[width=0.8\linewidth]{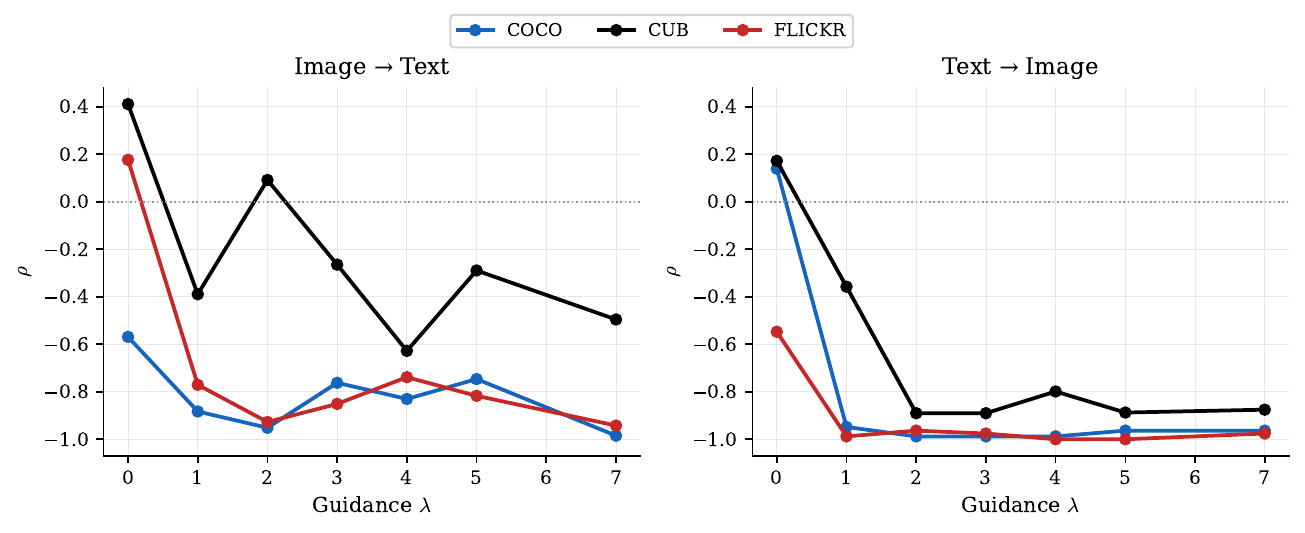}
    \caption{
        \textbf{Guidance scale ablation.}
        Spearman $\rho$ between per-query posterior entropy and retrieval
        difficulty as a function of $\lambda$, for I2T (left) and T2I (right).
    }
    \label{fig:cfg_ablation}
\end{figure}

The guidance scale $\lambda$ controls how strongly conditional sampling is
steered toward the query-conditioned posterior. When $\lambda=0$, sampling
reduces to the unconditional flow and ignores query-specific cross-modal
structure, so the resulting posterior samples scatter across the
gallery and the entropy $H = -\sum_j \pi_j \log \pi_j$ carries no information
about query difficulty. As $\lambda$ increases, the sampled distributions
$p_\phi(\mathbf{e}_T \mid \mathbf{e}_I)$ and $p_\phi(\mathbf{e}_I \mid \mathbf{e}_T)$
concentrate around modes compatible with the conditioning embedding: easy
queries produce low-entropy posteriors concentrated on few gallery items, while
hard queries remain spread out, making $H$ a meaningful aleatoric signal. We
ablate $\lambda \in \{0,1,2,3,4,5,7\}$ on MS-COCO, Flickr8k, and CUB using
1{,}000 images per dataset, evaluating the Spearman correlation $\rho$ between
per-query entropy and retrieval difficulty in both directions with all other
factors fixed. As shown in Figure~\ref{fig:cfg_ablation}, $\rho$ is near zero
or positive at $\lambda=0$, recovers strongly at $\lambda \geq 1$, and remains
stable across larger values. The T2I direction converges cleanly across all
three datasets; I2T exhibits higher variance for CUB due to its dense
10-caption annotation structure, which destabilises per-bin recall estimates at
this evaluation scale. Since retrieval rankings are derived from frozen CLIP
embeddings, $\lambda$ affects only the uncertainty estimates and not retrieval
scores. We use $\lambda=3$ in all experiments as a conservative default within
the stable regime.

\subsection{Backbone Robustness: SigLIP}
\label{sec:supp_siglip}

GeoFlowVLM is designed as a post-hoc probabilistic adapter over any frozen encoder that produces L2-normalized embeddings on the unit hypersphere. To verify that our results are not specific to the CLIP backbone used in the main experiments, we evaluate on SigLIP~\citep{zhai2023sigmoid} (ViT-L-16-SigLIP-384, pretrained on WebLI). Crucially, SigLIP embeddings share the same dimensionality as CLIP ViT-H-14 ($d = 1024$), so the GeoFlowVLM architecture and all baselines remain identical with no modification. We evaluate both aleatoric and epistemic uncertainty following the same protocol as the main paper (Q1 and Q2 respectively).

\begin{table}[t]
\centering
\caption{\textbf{Aleatoric uncertainty calibration under SigLIP backbone.} Spearman $S$ ($\downarrow$) and $R^2$ ($\uparrow$) between predicted aleatoric uncertainty and Recall@1, averaged over 5 independent runs. Ideal: $S=-1$, $R^2=1$. \textbf{Bold}: best; \underline{underline}: second-best.}
\label{tab:supp_siglip_aleatoric}
\scriptsize
\setlength{\tabcolsep}{7pt}
\renewcommand{\arraystretch}{1.1}
\begin{tabular}{ll cc cc cc}
\toprule
& & \multicolumn{2}{c}{\textbf{Flickr}}
  & \multicolumn{2}{c}{\textbf{MS-COCO}}
  & \multicolumn{2}{c}{\textbf{CUB}} \\
\cmidrule(lr){3-4}\cmidrule(lr){5-6}\cmidrule(lr){7-8}
& \textbf{Method}
  & $S\downarrow$ & $R^{2}\uparrow$
  & $S\downarrow$ & $R^{2}\uparrow$
  & $S\downarrow$ & $R^{2}\uparrow$ \\
\midrule
\multirow{5}{*}{\rotatebox[origin=c]{90}{\scriptsize Image$\rightarrow$Text}}
  & ProbVLM
    & $+0.988$ & $\mathbf{0.974}$
    & $+0.985$ & $\mathbf{0.915}$
    & \underline{$-0.264$} & \underline{$0.045$} \\
  & AsymVLM$_{\mathrm{PSD}}$
    & \underline{$-0.117$} & $0.023$
    & $-0.043$ & $0.027$
    & $+0.012$ & $0.010$ \\
  & AsymVLM$_{\mathrm{vMF}}$
    & \underline{$-0.117$} & $0.023$
    & $-0.043$ & $0.027$
    & $+0.012$ & $0.010$ \\
  & GroVE
    & $-0.106$ & $0.067$
    & \underline{$-0.216$} & $0.091$
    & $+0.050$ & $0.010$ \\
  & \cellcolor{ourblue} GeoFlowVLM
    & \cellcolor{ourblue} $\mathbf{-0.990}$ & \cellcolor{ourblue} $\underline{0.937}$
    & \cellcolor{ourblue} $\mathbf{-0.970}$ & \cellcolor{ourblue} $\underline{0.788}$
    & \cellcolor{ourblue} $\mathbf{-0.550}$ & \cellcolor{ourblue} $\mathbf{0.295}$ \\
\midrule
\multirow{5}{*}{\rotatebox[origin=c]{90}{\scriptsize Text$\rightarrow$Image}}
  & ProbVLM
    & $+1.000$ & \underline{$0.892$}
    & $+0.576$ & $0.458$
    & $+0.927$ & $0.904$ \\
  & AsymVLM$_{\mathrm{PSD}}$
    & $\mathbf{-1.000}$ & $0.868$
    & $\mathbf{-1.000}$ & \underline{$0.963$}
    & $\mathbf{-0.994}$ & \underline{$0.936$} \\
  & AsymVLM$_{\mathrm{vMF}}$
    & \underline{$-0.927$} & $0.822$
    & $\mathbf{-1.000}$ & $\mathbf{0.975}$
    & $-0.976$ & $0.914$ \\
  & GroVE
    & $-0.120$ & $0.116$
    & $+0.534$ & $0.256$
    & $+0.391$ & $0.151$ \\
  & \cellcolor{ourblue} GeoFlowVLM
    & \cellcolor{ourblue} $\mathbf{-1.000}$ & \cellcolor{ourblue} $\mathbf{0.953}$
    & \cellcolor{ourblue} \underline{$-0.995$} & \cellcolor{ourblue} $0.900$
    & \cellcolor{ourblue} $\mathbf{-0.994}$ & \cellcolor{ourblue} $\mathbf{0.955}$ \\
\bottomrule
\end{tabular}
\end{table}

\begin{figure}[t]
    \centering
    \includegraphics[width=0.75\textwidth]{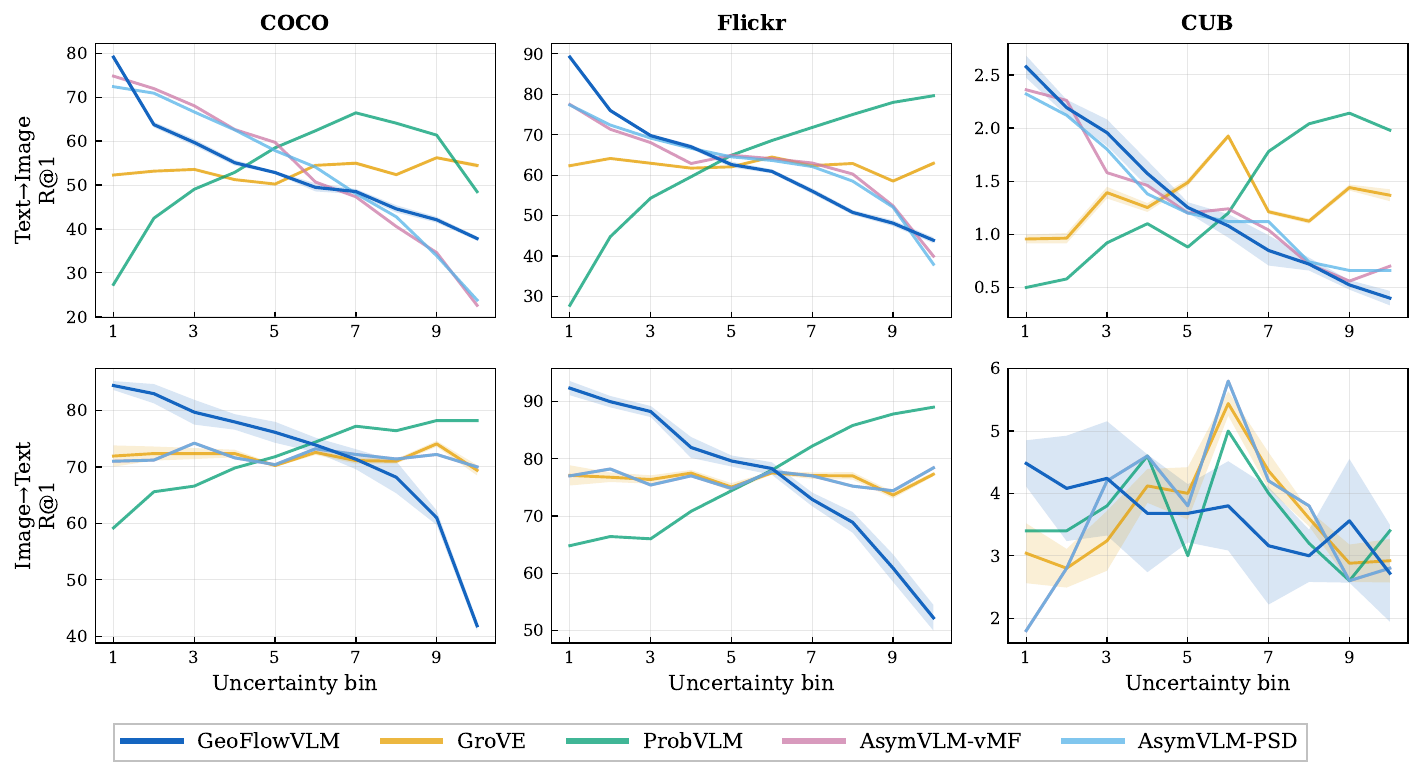}
    \caption{\textbf{Aleatoric uncertainty calibration under SigLIP backbone.} $R@1$ across uncertainty bins
ordered from low to high uncertainty for T2I (top) and I2T (bottom); curves show the mean over 5 independent runs. GeoFlowVLM shows a stronger and more consistent monotonic
decrease than all baselines across both retrieval directions and all datasets.}
    \label{fig:calibration_siglip}
\end{figure}


\paragraph{Aleatoric Uncertainty (Q1).}
Table~\ref{tab:supp_siglip_aleatoric} reports mean calibration performance under the SigLIP backbone, and Figure~\ref{fig:calibration_siglip} shows the corresponding per-bin trends averaged over 5 independent runs. GeoFlowVLM achieves the best or near-best Spearman correlation and $R^2$ across all three datasets in both retrieval directions, with AsymVLM variants competitive on MS-COCO T2I by a narrow margin. These results are consistent with the main paper findings on the CLIP backbone, confirming that the conditional retrieval entropy derived from the joint density generalizes to the SigLIP embedding space despite the differences in pretraining objective.

\begin{figure}[!htbp]
\centering
\includegraphics[width=\linewidth]{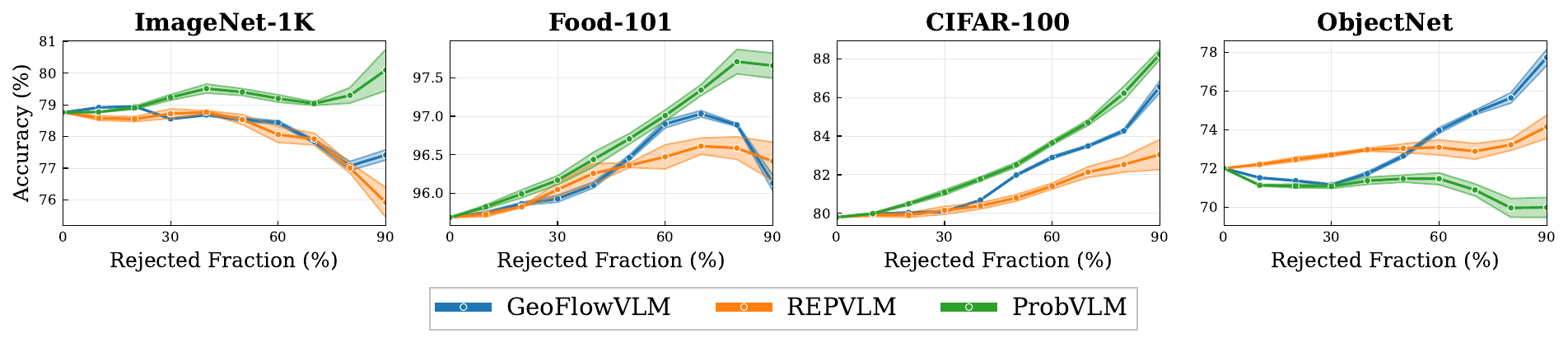}
\caption{\textbf{Epistemic uncertainty: Selective accuracy under SigLIP backbone.}}
\label{fig:supp_siglip_eps}
\end{figure}

\paragraph{Epistemic Uncertainty (Q2).}
Figure~\ref{fig:supp_siglip_eps} reports selective accuracy curves under the SigLIP backbone. GeoFlowVLM's epistemic score is $u_{\mathrm{ep}}(\mathbf{e}_I, \mathbf{e}_T) = -\log p_\phi(\mathbf{e}_I) - \log p_\phi(\mathbf{e}_T)$, where both marginals are obtained via the chain-rule decomposition of the learned joint density without additional parameters; REPVLM uses the same functional form but fits each marginal independently. Both methods exhibit degraded selective accuracy on ImageNet-1K, while ProbVLM maintains a broadly increasing curve. We attribute this to SigLIP's sigmoid pairwise pretraining objective, which unlike CLIP's softmax contrastive loss operates without cross-batch normalization, breaking the assumption that $-\log p_\phi(\mathbf{e})$ is monotonically related to out-of-distribution degree. ProbVLM's predicted variance is agnostic to this geometric structure and therefore unaffected. On Food-101, CIFAR-100, and ObjectNet all three methods produce increasing selective accuracy curves, with GeoFlowVLM consistently outperforming REPVLM, confirming that estimating the marginals from the joint density yields a richer epistemic signal than fitting them in isolation.



\end{document}